\documentclass[hidelinks]{article}
\def\shownotes{1}

\usepackage[main,final,nonatbib]{neurips_2026}

\usepackage[utf8]{inputenc} %
\usepackage[T1]{fontenc}    %
\usepackage{hyperref}       %
\usepackage{url}            %
\usepackage{booktabs}       %
\usepackage{amsmath,amsthm,amsfonts,amssymb}       %
\usepackage{nicefrac}       %
\usepackage{microtype}      %
\usepackage[dvipsnames]{xcolor}         %
\usepackage{algorithm}
\usepackage{algpseudocode}
\usepackage{subcaption}
\usepackage{macro}
\usepackage{multirow}

\usepackage[backend=biber,
            natbib=true,
            maxbibnames=99,
            minalphanames=3,
            uniquename=false,
            uniquelist=false,            
            maxcitenames=2,
            sorting=ynt,
            giveninits=true,                        
            labeldateparts=true,
            sortcites=true,
            backref=false]{biblatex}
\title{Multi-Objective Reinforcement Learning Using Routed Ensembles and Trajectory Attribution}

\author{%
  Zhenshuo Zhang\\
  Northeastern University\\
  Boston, MA\\
  \And
  Ruoxuan Xiong\\
  Emory University\\
  Atlanta, GA\\
  \And
  Hongyang R. Zhang\\
  Northeastern University\\
  Boston, MA\\
}

\begin{document}

\maketitle

\begin{abstract}
    Multi-objective reinforcement learning is a fundamental problem for modern AI, with applications spanning post-training, reasoning models, and robotic control. A central challenge is to learn policies that balance competing objectives, such as helpfulness versus honesty in safety and alignment, or speed versus energy in robotics. The optimal policy can vary sharply with objective-weighting combinations, and the extent of this variation depends crucially on the degree of conflict among objectives, making per-preference policy learning sample-inefficient. In this paper, we introduce objective-conflict coefficients and derive near-optimal regret bounds for online settings using this measure, thereby quantifying the impact of misalignment on learning efficiency. Then, we design routed ensemble policy learning, an algorithm that maintains a small set of basis policies and dynamically routes trajectories to the policy with the highest potential reward. Finally, we consider trajectory selection as another approach to mitigate objective conflict, and design a kernel surrogate trajectory attribution algorithm that estimates how feedback signals from a set of trajectories influence policy learning on other trajectories. The kernel can capture nonlinear interactions such as synergy and antagonism between trajectories. We evaluate our approach across diverse environments and benchmarks: routed ensembles improve average performance by 4.3\% over baselines for online environments from Meta-World and MO-Gymnasium, and kernel surrogate attribution improves correlation with leave-one-out analysis over existing baselines by 50\%. When applied to trajectory selection, kernel-based attribution yields a 27\% improvement in reward over baseline attribution methods.
\end{abstract}

\section{Introduction}\label{sec_intro}
Given $m$ potentially competing objectives in a reinforcement learning (RL) environment, how can we learn a small set of policies that maximizes expected reward under any specified preference weighting?
This problem is fundamental in multi-agent systems \cite{littman1994markov}, and is ubiquitous in modern AI systems. In robotics, agents often need to balance speed and energy efficiency. In reinforcement learning with human feedback, reasoning models, and post-training, models are often evaluated under competing criteria such as helpfulness, safety, and honesty \cite{joglekar2025training,sengupta2026multi}.

A core challenge in multi-objective RL (MORL) is that the optimal policy can vary drastically with objective weight combinations. For example, even small changes in preference can lead to qualitatively different optimal policies \cite{papadimitriou2000approximability,zhang2025scalable}. Prior work has focused on approximating the Pareto frontier of the optimal reward trade-offs \cite{van2014multi,yang2019generalized,zhang2025scalable}, or learning one policy for each objective \cite{wu2021accommodating,yang2024federated}, and it remains unclear \textit{whether or when} transferring trajectories across objectives is effective.

A key yet underexplored question is that the difficulty of learning across preferences is fundamentally tied to the degree of misalignment among objectives. When objective weightings are well aligned (e.g., rewarding both final and intermediate progress), additional objectives can facilitate learning, and data collected under one preference can benefit policy learning for others. When objectives are misaligned or adversarially-injected (e.g., RLHF with corrupted reward components), naive transfer can become inefficient or even harmful \cite{joglekar2025training,li2026moplex}. This problem is especially acute in high-dimensional settings, making it a critical open challenge to identify objective alignment \cite{joshibenchmarking}.

To address this challenge, we introduce a new theoretical framework based on measuring \emph{objective conflict coefficients} that quantify the degree of misalignment among objectives through the $\chi^2$-divergence. Our analysis shows that this coefficient interacts with the \textit{coverability coefficient}~\citep{xie2022role}---which captures how well a single exploration policy can cover the preference space---to determine the sample complexity of online MORL. Our regret bound generalizes single-objective results~\citep{agarwal2019reinforcement,jin2021bellman, xie2022role, chi2025statistical, chen2025outcome} and is the first to formalize how objective misalignment affects learning efficiency across preferences.
Our theoretical framework builds on recent advances in transfer learning under distribution shifts \cite{yang2025precise}. However, the technical challenge in MORL lies in quantifying the discrepancy in policy learning.
Our main result, Theorem \ref{theorem:regret}, relies on a low-rank basis assumption, along with a coverability condition that ensures sufficient exploration of the basis.

Motivated by our theoretical framework, we propose \textit{routed ensemble policy learning}, an online algorithm for learning a small set of basis policies, which can be applied to a real-world RL environment. This algorithm alternates between two steps: (i) routing each trajectory to the basis policy with the highest expected return under the corresponding preference weight, and (ii) partitioning trajectories into disjoint clusters and updating each basis policy using its assigned trajectories with a standard policy learning algorithm such as proximal policy optimization.

Next, we consider an alternative perspective for tackling objective conflicts, based on selecting trajectories. This motivates the problem of quantifying the influence of individual trajectories upon the learned optimal policy, a problem for which we term \emph{trajectory attribution}.
We design a scalable \emph{kernel-based trajectory attribution} procedure that estimates how trajectories help or hinder policy learning. Our procedure fits a kernel ridge regression surrogate to approximate policy performance across trajectory subsets. This can capture both individual contributions and higher-order interactions between trajectories. These interactions are common in high-dimensional settings and often underlie unexpected transfer behavior \cite{joshibenchmarking}. However, they are not captured by first-order influence approximations such as influence functions~\citep{rubinstein2025accuracy}, linear surrogate models \cite{liidentification,zhang2026efficient}, or gradient-similarity-based analysis \citep{wu2020understanding,hu2025snapshot}. Our approach goes beyond prior work by leveraging efficient approximations grounded in policy gradients and random projections \cite{zhang2025scalable}. The results from more accurate trajectory attribution can guide trajectory selection in downstream policy learning.

Lastly, we empirically validate our approach on the Meta-World robotics benchmark and high-dimensional MO-Gymnasium tasks. Our algorithm consistently outperforms multi-objective baselines, achieving a significant reduction in cumulative regret in environments with structured objective conflicts.
Specifically, routed ensemble learning outperforms the strongest baseline by $4.3\%$ on the Meta-World benchmark and achieves an $8\%$ average improvement in hypervolume on MO-Gymnasium tasks. 
Similar results are also observed in reinforcement learning with human feedback (RLHF) settings.
Additionally, kernel-based attribution provides a $50\%$ improvement in terms of the Spearman correlations with ground-truth leave-one-out influence scores.
When applied to downstream task selection, kernel-based selection yields a $27\%$ higher reward than existing attribution baselines.
We present detailed ablation studies to validate the theoretical insights from our proposed framework.
For instance, after the routed ensemble procedure, the coverability metric drops from $1.3$ for a single policy to $0.2$ for the ensemble policies, indicating that routing mitigates the exploration bottleneck caused by objective conflicts.
We also conduct a detailed analysis on the number of basis policies in the ensemble, as well as validate a low-rank basis assumption in our theoretical setup.

In summary, the contributions of this paper are threefold.
\begin{itemize}
    \item First, we introduce a new theoretical framework for MORL based on a $\chi^2$ measure that we call the objective conflict coefficient. The theoretical result can be relaxed to a setting with an approximate low-rank basis, and we empirically evaluate this assumption in various real-world environments.
    \item Second, we design two algorithms for tackling objective conflicts: (1) routed ensemble policy learning, and (2) kernel-based trajectory attribution to estimate inter-trajectory influence.
    \item Third, we demonstrate significant gains in sample efficiency and robustness across diverse MORL benchmarks covering language modeling, robotic control, and RLHF settings.
\end{itemize}

For the rest of this paper, we begin by stating the formal problem setup in Section \ref{sec_setup}.
Afterward, we present our theoretical framework and proposed approach to tackle objective conflicts in Section \ref{sec:approach}.
We will present our experiments to validate our proposed algorithms in Section \ref{sec:experiment}.
Lastly, we discuss the related work in Section \ref{sec_related} and conclude in Section \ref{sec_conclude}.
We provide the omitted algorithmic description and proofs in Appendix \ref{sec:algorithm} and \ref{sec_proof}.
Omitted experimental setup and empirical evaluations are provided in Appendix \ref{app:experiments}.
\section{Preliminaries}\label{sec_setup}

We consider an RL environment modeled as a \emph{multi-objective Markov decision process} (MOMDP), defined by the tuple
$\mathcal{M} = ( \mathcal{S}, \mathcal{A}, \Pr(\cdot \mid \cdot, \cdot), \mathbf{r}, \Omega, f_\Omega, H ),$
where $\mathcal{S}$ and $\mathcal{A}$ denote the state and action spaces, $\Pr(\cdot \mid s, a)$ is the transition kernel, $\mathbf{r} : \mathcal{S} \times \mathcal{A} \to \mathbb{R}^m$ is a vector-valued reward function with $m \ge 2$ objectives, and $H$ is the horizon. The preference space is denoted as $\Omega \subseteq \mathbb{R}^m_+$. The scalarization operator $f_w$ maps the return vector to a scalar utility under preference $w \in \Omega$. We evaluate performance under a target preference distribution $w \sim \cP$.

A stochastic policy $\pi \in \Pi$ maps states to action distributions, and induces a trajectory $\tau = (s_0, a_0, \mathbf{r}_0, \ldots, s_H, a_H, \mathbf{r}_H)$, where $a_h \sim \pi(\cdot \mid s_h)$ and $s_{h+1} \sim \Pr[\cdot \mid s_h, a_t]$. {More generally, the policy can be time dependent $\pi_1, \cdots, \pi_H$ with $a_h \sim \pi_h (\cdot\mid s_h)$ and induces a trajectory similarly.} 

The trajectory return is the cumulative reward vector
$\mathbf{G}(\tau) = \sum_{h=0}^{H}\mathbf{r}(s_h, a_h) \in \mathbb{R}^m \,.$
The expected return of policy $\pi$ from an initial distribution $s_0 \sim \rho$ is given by
\[ \mathbf{J}(\pi) = \exarg{s_0 \sim \rho, \tau \sim \pi}{\mathbf{G}(\tau)} \in \mathbb{R}^m. \]
We primarily focus on linear operators, and thus the policy's expected return can be written as $\mathcal{J}(\pi; w) = \langle w, \mathbf{J}(\pi) \rangle .$
We consider an episodic online MORL setting with episodes $t = 1, \ldots, T$. At the beginning of episode $t$, the system designer has access to the history $\mathcal{D}^{(t-1)} = \{(\tau^{(u)}, w_u)\}_{u=1}^{t-1}$ and observes the current preference vector $w_t\in\Omega$. Based on $(\mathcal{D}^{(t-1)},w_t)$, a learner applies a possibly randomized policy learning algorithm to select a policy $\pi^{(t)}$, interacts with the MOMDP for $H$ horizons, and observes a trajectory $\tau^{(t)} = (s^{(t)}_{0}, a^{(t)}_{0}, \mathbf{r}^{(t)}_{0}, \ldots, s^{(t)}_{H}, a^{(t)}_{H}, \mathbf{r}^{(t)}_{H})$. 

After $T$ episodes, the learner collects a set of trajectories $\mathcal{D} = \{(\tau^{(t)}, w_t)\}_{t=1}^T$ and aims to learn a preference-conditioned policy $\pi(\cdot;w)$ that attains high utility for target preferences $w$ in the support of $\cP$. In this paper, we focus on learning an ensemble policy corresponding to a small set of $\pi(\cdot; w)$.

Our theoretical framework builds on the notion of \emph{coverability}, which is a complexity measure defined on a fixed policy class $\Pi$ \cite{agarwal2019reinforcement,xie2022role}:
\[ C_\cov := \inf_{\mu \in \Delta(\mathcal{S} \times \mathcal{A})} \sup_{\pi \in \Pi} \max_{h \in [H]} \sup_{(s,a)} \frac{d_h^\pi(s,a)}{\mu_h(s,a)}, \]
where $d_h^\pi(s,a) = \Pr^\pi(s_h = s, a_h = a)$ denotes the state-action probability under policy $\pi$, and the infimum is over sequences of $\mu_h\in\Delta(\mathcal{S}\times\mathcal{A})$. Intuitively, $C_{\cov}$ captures how hard it is for a single exploration distribution to simultaneously cover the occupancy measures of all policies in $\Pi$, serving as a hardness measure for online exploration. %

\section{Our Approach}\label{sec:approach}

We first provide several examples to illustrate how multi-objective misalignment affects policy learning. We then formalize this observation through the notion of objective conflict coefficient and show how it interacts with coverability to derive online regret bounds. The regret bound is shown to be tight with a matching lower bound.
These theoretical insights motivate our algorithm, routed ensemble policy learning, which learns a small set of basis policies and routes trajectories to the policies that give the highest reward.
Lastly, to interpret objective conflict and routing behavior, we develop a kernel-surrogate trajectory attribution algorithm that quantifies how individual trajectories influence policy learning across preference weights.

\subsection{Examples and motivation}\label{subsec:motivation}

We start with two common MOMDP use cases to motivate our theoretical framework.

\begin{example}[Learning the Pareto frontier of a MOMDP]\label{example_pareto}
    The {Pareto frontier} consists of policies for which no objective can be improved without worsening at least one other objective.  Formally, a policy $\pi$ {dominates} $\pi'$ (denoted $\pi \succ \pi'$) if $J_i(\pi) \ge J_i(\pi')$ for all objectives $i \in [m]$, with strict inequality for at least one $i$, where $J_i(\pi)$ is the $i$-th entry of $\mathbf{J}(\pi)$. 
\end{example}

\begin{figure}[t!]
    \centering
    \begin{subfigure}[b]{0.99\linewidth}
        \centering
        \includegraphics[width=0.999\linewidth]{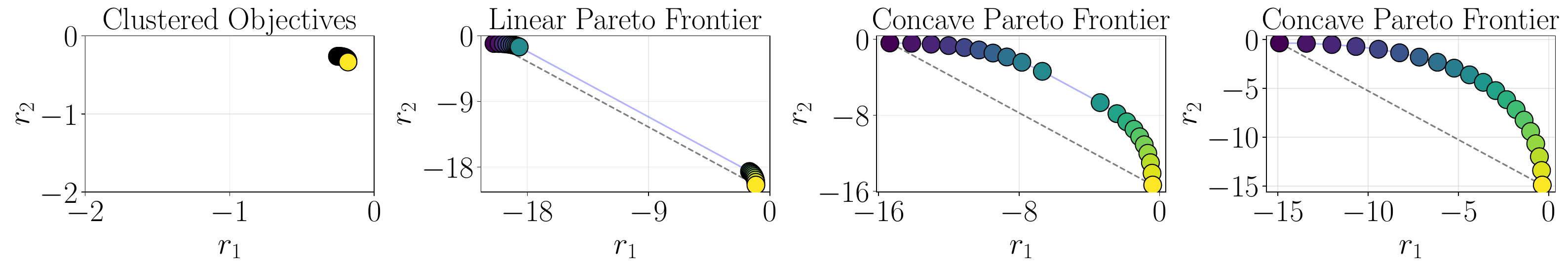}
    \end{subfigure}
    \caption{We visualize the objective space returns $(r_1, r_2)$ for policies optimized under varying preference weights $w \in [0, 1]$ in three canonical regimes --- clustered, linear and concave Pareto frontier. Light yellow indicates that $w$ is closer to $1$, while dark blue indicates $w$ is closer to $0$.
    }\label{fig_motivation}
\end{figure}

\begin{example}[Online exploration under corrupted objectives]\label{example_adversarial}
    Consider an MOMDP with $m = 2$, where $r_1$ is the clean task reward and $r_2$ may be corrupted (possibly adversarial). %
    The target preference is $w = [1,0]^\top$, so the goal is to optimize $r_1$. However, if corruption is present, trajectories collected online may be influenced by the spurious signal in $r_2$. This motivates the need to {identify} corrupted trajectories and to {attribute} their alignment with $w$.
\end{example}

We illustrate that learning in an MOMDP depends on how objectives relate to one another. In the two-objective case ($m=2$), there are three canonical regimes, as shown in Figure~\ref{fig_motivation}: (i) \emph{aligned objectives}, (ii) \emph{conflicting objectives with an (approximately) linear Pareto frontier}, and (iii) \emph{conflicting objectives with a strictly concave frontier}. Aligned objectives admit a single policy that is optimal for all preferences. Linear conflict is captured by two corner policies whose mixtures span the frontier. Concave conflict requires additional intermediate policies beyond the corner solutions. When $m$ grows, different objectives can simultaneously fall into different regimes—some aligned, linearly conflicting, and others exhibiting concave trade-offs. These induce discontinuous shifts and make policy optimization harder.

\subsection{Theoretical characterization through conflict coefficients}\label{sec_coverability}

With the above examples in mind, next, we quantify how objective conflict impacts sample complexity in MORL. 
We assume that there exists a set of $K$ latent basis policies that (approximately) represent optimal policies across preferences for $w \in \Omega$. The number of basis policies $K$ captures the intrinsic complexity of the objective trade-offs. %
We capture the extent of conflict via the \emph{objective conflict coefficient} $\Gamma$, which measures the diversity of basis occupancies relative to their mixture.
\begin{definition}\label{def:obj_conflict_coef}
    Let the basis-mixture occupancy be the uniform average of the $K$ basis policies, $\overline{d}_h(s,a) \coloneqq \frac{1}{K} \sum_{k=1}^{K} d_h^{\pi^\ast_{\basis k}}(s,a)$. 
    The \textit{objective conflict coefficient} is 
    \begin{align} \Gamma \coloneqq \max_{h \in [H]} \max_{k \in [K]} \chi^2\left(d_h^{\pi^\ast_{\basis k}} \| \overline{d}_h\right). \label{eq_gamma}\end{align}
\end{definition}
Smaller $\Gamma$ indicates substantial overlap among basis policies and easier transfer across preferences, while large values correspond to near-disjoint occupancies and greater learning difficulty. Moreover, $\Gamma$ is always bounded by $K-1$ (Lemma~\ref{lem:gamma_K_ub}), so its magnitude scales with the number of basis policies. For instance, $\Gamma=0$ in Case~1, and $\Gamma$ is larger in Case~3 than in Case~2.

We now analyze online MORL through cumulative regret:
\[\mathsf{Reg}(T) = \sum_{t=1}^T \left( \mathcal{J}(\pi^\star_{w_t}; w_t)  - \mathcal{J}(\pi^{(t)}; w_t) \right) \,. \]

We impose two structural assumptions. The first states that optimal policies lie in the convex hull of the basis occupancies.

\begin{assumption}%
\label{assm:ref_completeness}
    There exists a finite set of latent basis policies $\{\pi^\star_{ 1},\ldots,\pi^\star_{ K}\}$ such that for any $w\in\Omega$, there exists a mixing vector $\nu_w\in \Delta^{K-1}$ (the probability simplex over $K$ elements) satisfying $d_h^{\pi^\star_w}(s,a) = \sum_{k=1}^{K} \nu_w(k)\, d_h^{\pi^\star_{\basis k}}(s,a)$ for all $ (s,a)\in\mathcal{S}\times\mathcal{A}\,.$
\end{assumption}

This assumption is imposed to yield a clean characterization of the role of $K$ in regret.
We relax this condition in Appendix~\ref{app:approx-realizability} by allowing optimal occupancies to lie within an $\epsilon$-neighborhood of the convex hull, under which the same qualitative conclusions continue to hold. 
The second assumption ensures sufficient exploration of all basis components.

\begin{assumption}%
\label{assm:mixture_stability}
    For each $t\ge 1$, there exists $\nu^{(t)}\in [0,1]^K$ such that $d_h^{\pi^{(t)}}(s,a)=\sum_{k=1}^{K} \nu^{(t)}(k)\, d_h^{\pi^\star_{\basis k}}(s,a)$ for all $(s,a)$ and $H$. 
    Letting $\overline{\nu}^{(t)}(k) = \frac{1}{t-1}\sum_{i=1}^{t-1}\nu^{(i)}(k)$, we assume there exist $c_0\in(0,1]$ and a burn-in time $t_0$ such that $\min_{k\in[K]} \overline{\nu}^{(t)}(k) \ge {c_0}/{(C_{\cov}K)}$ for all $t\ge t_0$.
\end{assumption}

This condition holds, for example, when preferences are sampled with full support over $\Omega$, ensuring persistent coverage of all basis policies.

We now derive a regret bound that explicitly quantifies the impact of $K$ and $\Gamma$. We study a $K$-policy variant of the optimistic confidence-set algorithm~\cite{jin2021bellman}. The learner maintains $K$ confidence sets $\cF_k^{(t-1)}(\beta)$, one for each basis index $k\in[K]$, and at episode $t$ executes the optimistic policy $\pi^{(t)}=\pi_{f^{(t)}}$ associated with some $f^{(t)}\in\cF^{(t-1)}_{k_t}(\beta)$ for the routed basis $k_t$. A detailed description is provided in Appendix~\ref{app:proof}.
Each $\cF_k$ is given by $\cF_{k,1}\times\cdots\times\cF_{k,H}$, with $\cF_{k,h}\subseteq(\cS\times\cA\times\Omega\to[0,H])$. We assume that $\cF_k$ is finite and write $|\cF| =\max_{k\in[K]}|\cF_k|$. For infinite classes, one can replace $\log|\cF|$ via covering numbers \cite{jin2021bellman}.
Our main result is stated as follows.

\begin{theorem}%
    \label{theorem:regret}
    Suppose scalarized rewards are bounded in $[0,1]$ per step, Bellman realizability and completeness hold for each per-basis class $\cF_k$, and Assumptions~\ref{assm:ref_completeness}--\ref{assm:mixture_stability} hold. Let $\beta = c_1\log(K|\cF|T/\delta)$ be the confidence-set radius after a union bound across the $K$ basis classes. After $t_0=\mathcal{O}(C_\cov K\beta)$, then with probability at least $1-\delta$, there exists an algorithm with $K$ policies such that the cumulative regret satisfies
	     \begin{equation}\label{eq:regret_explicit}
	    \mathsf{Reg}(T) \le {\mathcal{O}_T}\left( H \sqrt{C_\cov(1 + \Gamma) \log(K \abs{\cF} T \delta^{-1}) T} \right). %
	\end{equation}
\end{theorem}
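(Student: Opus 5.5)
We follow the GOLF-style analysis of \cite{jin2021bellman} as refined by the coverability argument of \cite{xie2022role}, carried out separately for each basis index $k$ and then aggregated through the mixture structure of Assumptions~\ref{assm:ref_completeness}--\ref{assm:mixture_stability}.

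\textbf{Step 1 (confidence sets).} For each $k$, the standard least-squares concentration argument together with Bellman realizability and completeness of $\cF_k$ gives, with probability $1-\delta/K$, two facts. First, $Q^\star \in \cF_k^{(t-1)}(\beta)$ for all $t$. Second, every $f\in\cF_k^{(t-1)}(\beta)$ has small in-sample squared Bellman error, $\sum_{i<t}\mathbb{E}_{d_h^{\pi^{(i)}}}[(f_h-\mathcal{T}_h f_{h+1})^2]\le \mathcal{O}(\beta)$. A union bound over $k\in[K]$ produces the $\log(K|\cF|T/\delta)$ in $\beta$. Optimism then gives the decomposition
\[
\mathcal{J}(\pi^\star_{w_t};w_t)-\mathcal{J}(\pi^{(t)};w_t)\le \sum_{h}\mathbb{E}_{d_h^{\pi^{(t)}}}\bigl[\delta_h^{(t)}\bigr],\qquad \delta_h^{(t)}:=f_h^{(t)}-\mathcal{T}_h f^{(t)}_{h+1}.
\]
Summing over $t$ reduces the regret to bounding the on-policy Bellman errors by the off-policy (historical) ones.

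\textbf{Step 2 (change of measure through the basis).} Here the objective conflict coefficient enters. By Assumption~\ref{assm:mixture_stability}, $d_h^{\pi^{(t)}}=\sum_k\nu^{(t)}(k)d_h^{\pi^\star_k}$, and the historical occupancy is $\sum_k\overline\nu^{(t)}(k)d_h^{\pi^\star_k}$. For $t\ge t_0$ we have $\overline\nu^{(t)}(k)\ge c_0/(C_\cov K)$, so the historical data dominates $\frac{c_0}{C_\cov}\overline d_h$. Apply Cauchy--Schwarz to each basis component:
\[
\mathbb{E}_{d_h^{\pi^\star_k}}[\delta]\le \sqrt{(1+\chi^2(d_h^{\pi^\star_k}\|\overline d_h))\,\mathbb{E}_{\overline d_h}[\delta^2]}\le\sqrt{(1+\Gamma)\tfrac{C_\cov}{c_0}\tfrac{1}{t-1}\textstyle\sum_{i<t}\mathbb{E}_{d_h^{\pi^{(i)}}}[\delta^2]}.
\]
By Step 1 the right-hand side is at most $\sqrt{(1+\Gamma)C_\cov\beta/t}$, after absorbing $c_0$ into the constant. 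Summing over $t$ gives $\sum_t t^{-1/2}\lesssim\sqrt T$. Summing over $h$ gives the factor $H$, since each $\delta_h$ is bounded by $H$ and the per-step rewards lie in $[0,1]$. The first $t_0=\mathcal{O}(C_\cov K\beta)$ episodes contribute at most $Ht_0$, which is lower order in $T$ and is absorbed by $\mathcal{O}_T$. Combining these yields \eqref{eq:regret_explicit}.

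\textbf{Main obstacle.} The delicate point is that the confidence set used at round $t$ is $\cF_{k_t}$, built only from trajectories routed to basis $k_t$. Its squared-error guarantee is therefore in terms of the occupancy of those routed data, not the full history. I would handle this in one of two ways. The first is to let each $\cF_k$'s loss use all trajectories, which is valid because completeness holds for each class; the routing then only selects which optimistic function is executed. The second is to show that routed data still dominates $\overline d_h$ up to the same $c_0/(C_\cov K)$ factor. I would first try the first route, since it makes the Step~2 bound go through verbatim. A secondary subtlety is that $\nu^{(t)}$ need not sum to one, so Step~2 should be carried out with $\sum_k\nu^{(t)}(k)\le1$ or normalized explicitly. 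One must also avoid incurring an extra $K$ factor, which the definition of $\Gamma$ relative to the uniform mixture $\overline d_h$ is designed to absorb.
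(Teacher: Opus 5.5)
Your proposal follows the paper's proof step for step everywhere except at optimism. The shared skeleton is: reduce regret to on-policy Bellman residuals, apply Cauchy--Schwarz against the basis mixture $\bar d_h$, bound the conflict factor by $1+\Gamma$, use Assumption~\ref{assm:mixture_stability} to get $\bar d_h\le \frac{C_\cov}{c_0(t-1)}\sum_{i<t}d_h^{(i)}$, convert to $\beta$, sum $(t-1)^{-1/2}$, and pay $Ht_0$ for burn-in.

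Your first fix for the ``main obstacle'' is exactly the pooled-confidence variant the paper analyzes. Every $\cF_k$ is fit on the full history, and routing only selects which optimistic function is executed; this is what makes Lemma~\ref{lemma:in-sample-witness} work. Your per-component Cauchy--Schwarz, averaged over $\nu^{(t)}$, is equivalent to the paper's single Cauchy--Schwarz on $d_h^{(t)}$ followed by Jensen on the convex $\phi_h$. Your worry about $\nu^{(t)}$ is moot: summing $d_h^{\pi^{(t)}}=\sum_k\nu^{(t)}(k)d_h^{\pi^\star_k}$ over $(s,a)$ forces $\sum_k\nu^{(t)}(k)=1$.

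The one genuine difference is optimism. You take $Q^\star(\cdot;w)\in\cF_k$ for every $k$, i.e.\ standard GOLF realizability in each class. Then any $k_t$ with an optimistic $f\in\cF_{k_t}^{(t-1)}(\beta)$ gives $V_1^{f^{(t)}}(s_0;w_t)\ge\mathcal{J}(\pi^\star_{w_t};w_t)$. Under this reading Assumption~\ref{assm:ref_completeness} is never used, and indeed you never invoke it.

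The paper instead reads per-basis realizability as $Q^{\pi^\star_k}(\cdot;w)\in\cF_k$ (see $E_{\mathrm{opt}}$) and recovers optimism in Lemma~\ref{lemma:routed-optimism}:
\begin{itemize}
\item By Assumption~\ref{assm:ref_completeness} and linearity of value in occupancy, $\mathcal{J}(\pi^\star_w;w)=\sum_k\nu_w(k)\mathcal{J}(\pi^\star_k;w)$.
\item Hence, by Lemma~\ref{lem_routing}, some basis $k^\star(w_t)$ attains the optimal value.
\item Choosing $(k_t,f^{(t)})$ by a joint argmax over $k\in[K]$ and $f\in\cF_k^{(t-1)}(\beta)$, as in \eqref{eq:routing-rule}, then yields optimism.
\end{itemize}
If you intend the per-basis reading, you must add this step and state that joint optimistic routing rule. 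Your proposal never says how $k_t$ is chosen.

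On what each buys: your route is the standard, self-contained argument. It also sidesteps a subtlety, namely that a Bellman-optimality confidence set only retains functions with vanishing residual under $\cT^w$, which for $Q^{\pi^\star_k}(\cdot;w)$ essentially requires $\pi^\star_k$ to be optimal for $w$. The price is that each class must contain the optimal $Q^\star(\cdot;w)$, so the $K$-basis structure plays no role in optimism. The paper's route is where Assumption~\ref{assm:ref_completeness} and the separate per-basis classes actually do work.
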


Theorem~\ref{theorem:regret} decomposes learning difficulty into (i) exploration hardness, governed by $C_{\cov}$, and (ii) transfer hardness induced by objective conflict, captured by $\Gamma$. When objectives are well aligned, both $\Gamma$ and $K$ are small, and the regret approaches the single-objective rate; when basis occupancies are nearly disjoint, $\Gamma$ increases, reflecting the cost of transferring information across preferences.  The full proof of Theorem \ref{theorem:regret} is given in Appendix~\ref{app:proof}.

\textbf{Nearly-tight regret lower bound.} We complement this upper bound with \emph{a matching lower bound} in Theorem \ref{theorem:lower_bound} in Appendix~\ref{app:lower_bound}, in the order of $\Omega_T\big( H\sqrt{(1+\Gamma)T}\big)$, establishing the tightness of the dependence on $\Gamma$ and $T$ up to $C_{\cov}$ and logarithmic factors.

\smallskip
\begin{remark}
    Although Theorem \ref{theorem:regret} is developed for a stylized online algorithm, similar proof techniques can be extended to more general settings.
    Proposition~\ref{prop:approx-regret} in Appendix~\ref{app:approx-realizability} shows that, when the low-rank basis Assumption~\ref{assm:ref_completeness} is relaxed to approximate low-rankness, the resulting regret bound preserves the same qualitative dependence on $K$ and $\Gamma$.
    
    Furthermore, empirical analysis in Appendix~\ref{app_omitted} demonstrates that these insights extend beyond the theoretical setting.
    To assess the approximation empirically, we measure the convex-hull residual over a preference grid. In a tabular concave setting, the mean residual decreases from $0.186$ at $K=3$ to $1.45\times10^{-4}$ at $K=21$; on MO-Hopper, it decreases from $0.443$ at $K=3$ to $0.231$ at $K=15$ using candidate-selected bases. The analysis below uses the exact assumption, while Appendix~\ref{app:approx-realizability} treats nonzero residuals.
    We also find that increasing the size of $N$ gradually reduces the approximation error imposed by the low-rank condition (See Table \ref{tab:occupancy_hull_residuals} for the evaluation).
\end{remark}

\subsection{Learning routed ensemble policies}

In Sections~\ref{subsec:motivation} and~\ref{sec_coverability}, the basis policies are assumed latent. Next, we design a scalable \emph{routed ensemble policy learning} algorithm that:
\begin{itemize}
    \item[(i)] explicitly learn a set of $N$ basis policies, with each trajectory routed to the highest-reward basis, and 
    \item[(ii)] learn a preference-conditioned transfer operator $f_w: \Pi^N \rightarrow \Pi$ that maps these basis policies to a composite policy for any given preference $w$.
\end{itemize}
We also clarify that the ensemble size $N$ may or may not be the same as $K$ as stated in Assumption \ref{assm:ref_completeness}.
Thus, the algorithms studied will apply to a real-world MORL environment.

Our proposed Algorithm~\ref{alg_routing} alternates between scoring, routing, and updating:
(i) \emph{Scoring}: For each trajectory and each basis policy, compute a score using its estimated scalarized return (cf.~Equation~\eqref{eq_score} below).
(ii) \emph{Routing}: Assign each trajectory to the basis policy that achieves the highest score for the corresponding preference, inducing a partition of the data into $N$ clusters.
(iii) \emph{Updating}: Update each basis policy using only the trajectories assigned to its cluster via a standard RL subroutine (e.g., upper-confidence bound value iteration for MOMDPs~\cite{wu2021accommodating}).
(iv) \emph{Iteration}: Repeat the above steps across online batches until both the cluster assignments and the learned policies stabilize.
We formalize the learning objective as:
\begin{align}\label{eq_opt}
    \max_{\pi_1, \pi_2, \dots, \pi_N} \exarg{s_0 \sim \rho, w \sim \cP}{ \biginner{w}{\sum_{h=0}^H \mathbf{r}(s_h, f_w)} }\,,
\end{align}
which maximizes the expected scalarized return $\mathcal{J}(\pi; w)$ of the composed policy $f_w(\pi_1,\ldots,\pi_N)$ under the target preference distribution $\mathcal{P}$.

Importantly, the routing step is \emph{optimal} under this formulation: when $f_w$ lies in the convex hull of a fixed set of basis policies, the optimal solution reduces to deterministically selecting the basis policy that maximizes the return under $w$. This follows from the linearity of $\mathcal{J}(\pi; w)$ in $w$, and is formalized in the following lemma.

\begin{lemma}\label{lem_routing}
    Fix a set of $N$ policies $\{\pi_1,\ldots,\pi_N\}$.
    Suppose $f_w \in \mathrm{conv}\{\pi_1,\ldots,\pi_N\}$ and $d_h^{f_w}(s,a) = \sum_{i=1}^N \alpha_i(w) d_h^{\pi_i}(s,a)$ for $\alpha(w) \in \Delta^{N-1}$.
    There exists an optimal routing function $\widehat f$ for Equation~\eqref{eq_opt} such that, for every $w \in \Omega$, $\widehat{f}_w = \pi_{j}$, where $j \in \arg\max_{i \in \set{1, 2, \dots, N}} \langle w, J(\pi_i) \rangle.$
\end{lemma}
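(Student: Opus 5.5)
The plan is to show that the objective in Equation~\eqref{eq_opt} separates over preferences, and that for each fixed $w$ it is a linear function of the mixing weights $\alpha(w)$ over the simplex. A linear function on a simplex attains its maximum at a vertex, which gives the deterministic routing rule.

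\textbf{Step 1: linearity of the return in the occupancy.} For any policy $\pi$, write
\[
\mathcal{J}(\pi;w)=\Big\langle w,\sum_{h=0}^H \sum_{(s,a)} d_h^{\pi}(s,a)\,\mathbf{r}(s,a)\Big\rangle .
\]
This identity is just the definition of $\mathbf{J}(\pi)$ with expectation over $s_0\sim\rho$, rewritten through the state--action occupancies. Substituting the hypothesis $d_h^{f_w}=\sum_i \alpha_i(w)\, d_h^{\pi_i}$ and exchanging the finite sums gives
\[
\mathcal{J}(f_w;w)=\sum_{i=1}^N \alpha_i(w)\,\langle w,\mathbf{J}(\pi_i)\rangle .
\]
This identity is the only place where the convex-hull assumption enters, and I expect it to be the main point needing care. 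The inner objective in Equation~\eqref{eq_opt} is written loosely as $\mathbf{r}(s_h,f_w)$. I would interpret it as the expected scalarized return of the composite policy $f_w$, so that it equals $\mathcal{J}(f_w;w)$. I would also note that the identity requires only occupancy-level mixing, not any particular per-state policy mixture.

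\textbf{Step 2: pointwise maximization.} With $\{\pi_i\}$ fixed, the objective becomes
\[
\mathbb{E}_{w\sim\cP}\Big[\sum_i \alpha_i(w)\, v_i(w)\Big], \qquad v_i(w):=\langle w,\mathbf{J}(\pi_i)\rangle .
\]
For each $w$, $\sum_i \alpha_i v_i\le \max_i v_i(w)$ because $\alpha(w)\in\Delta^{N-1}$, with equality when $\alpha(w)=e_{j(w)}$ for some $j(w)\in\arg\max_i v_i(w)$. Define $\widehat f_w=\pi_{j(w)}$. Its occupancy is the vertex mixture $e_{j(w)}$, so it is admissible. Integrating the pointwise inequality shows that its value upper-bounds the value of every admissible $f$, so $\widehat f$ is optimal.

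\textbf{Step 3: measurability.} The expectation over $\cP$ requires $w\mapsto \widehat f_w$ to be measurable. I would secure this by breaking ties toward the smallest index: $j(w)=\min\arg\max_i v_i(w)$. Each $v_i$ is linear, hence continuous, in $w$, so every set $\{w: j(w)=i\}$ is a finite Boolean combination of closed half-spaces and is Borel. Ties do not affect optimality, since any maximizer achieves $\max_i v_i(w)$.
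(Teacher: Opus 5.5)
Your proposal is correct and follows essentially the same route as the paper: you write the scalarized return of $f_w$ through occupancies as $\sum_i \alpha_i(w)\langle w,\mathbf{J}(\pi_i)\rangle$ and observe that a linear function on the simplex is maximized at a vertex. Your pointwise-then-integrate optimality argument and the smallest-index tie-breaking for measurability are small rigor additions that the paper's proof leaves implicit.
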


\begin{proof}
    For any $w$, since $f_w$ lies in the convex hull of $\pi_1, \dots, \pi_N$, the state-action occupancy probabilities satisfy $d^{f_w}_h(s, a) = \sum_i \alpha_i(w) d^{\pi_i}_h(s, a)$.
    Thus, the scalarized return satisfies
    \begin{align}\label{eq_score}
        J_\alpha(f_w) =& \sum_h \sum_{a_h, s_h} d_h^{f_w}(s_h,a_h) \biginner{w}{r_h(s,a)} \nonumber \\
         =& \sum_h \sum_{a_h, s_h} \sum_{i=1}^N \alpha_i(w) d_h^{\pi_i}(s_h,a_h) \biginner{w}{r_h(s,a)} = \sum_{i=1}^N \alpha_i(w)\biginner{J(\pi_i)}{w}.
    \end{align}
    The maximum over $\alpha \in \Delta^{N-1}$ is therefore attained at an extreme point, corresponding to selecting a single basis policy.
    Thus, the optimal strategy is to choose the policy $\pi_i$ that maximizes $\langle J(\pi_i), w\rangle$.
\end{proof}

Lemma \ref{lem_routing} motivates the scoring and routing steps: we compute scores using equation~\eqref{eq_score} and route each trajectory to the basis policy that maximizes the score. Note that in the updating step, each basis policy is trained only on its assigned trajectories. This is crucial because a policy that performs well for one preference $w$ may perform poorly for a different $w'$. Consequently, mixing trajectories across preferences can degrade learning~\cite{wu2021accommodating}. Moreover, Proposition~\ref{prop:offpolicy-conflict} in Appendix~\ref{subsec:algorithm-3-analysis} connects this algorithm to $\Gamma$, showing that the intuition underlying Theorem~\ref{theorem:regret} carries over to its performance.

Finally, while our algorithm assumes $N$ is given, in practice it can be selected via cross-validation (Section~\ref{sec_abla}). One may adaptively grow $N$ by identifying new clusters when existing policies do not explain incoming trajectories, drawing on the clustering literature \cite{li2021learning, chen2018practical}.
Compared to previous work \cite{wu2021accommodating} that learns one optimal policy for each objective, the routed ensemble algorithm can leverage potential overlap that is latent in the underlying RL environment.

\begin{algorithm}[t!]
\caption{Learning Routed Ensemble Policies for Multi-Objective Markov Decision Processes}\label{alg_routing}
\textbf{Input:} A MOMDP $\mathcal{M}$; Initial state distribution $\rho$; Preference distribution $\cP$; Number of policies $N$\\
\textbf{Require:} A policy learning algorithm $F$; A policy evaluation function $J$;  Number of iterations $T$; Horizon $H$; Number of observed trajectories $B$ per iteration\\ %
\textbf{Output:} A set of $N$ policies %
\begin{algorithmic}[1]
    \State $\pi_1^{(0)}, \pi_2^{(0)}, \dots, \pi_N^{(0)}: \cS \rightarrow \Delta(\cA)$, start from $N$ random initializations
    \For{$t = 1, \dots, T$}
        \State $w_t \sim \cP$, $\mathcal{D}_t \leftarrow \emptyset$, $S\leftarrow$ $N$ by $B$ score matrix
        \For{$i = 1, \dots, B$}
            \State $j_i \leftarrow$ sample uniformly from $\set{1, 2, \dots, N}$
            \State $s_0 \leftarrow \rho$: sample from the initial state distribution
            \State $\tau^{(t)}_{i} \leftarrow (s_0, a_0, \ldots, s_H, a_H)$: simulate from $s_0$ for $H$ steps using $\pi_{j_i}^{(t-1)}$ %
            \For{$j = 1, \dots, N$}
                \State $S_{j, i} \leftarrow \sum\limits_{h=0}^H \Big\langle{{w}_t, \sum\limits_{a\in\cA} \mathbf{r}(s_h, a) \pi_j^{(t-1)}(s_h, a)} \Big\rangle$  \Comment{Score policy $j$ on trajectory $i$}
            \EndFor
            \State $\mathcal{D}_t \leftarrow \mathcal{D}_t \cup \{ (\tau^{(t)}_{i}, w_t) \}$
        \EndFor
        \For{$j = 1, \dots, N$}
           \State $\mathcal{D}_{t,j} \leftarrow \Big\{(\tau^{(t)}_{i}, w_t) \in \mathcal{D}_t \mid j \in \argmax\limits_{1\le j'\le N} S_{j', i}\Big\}$ \Comment{Route trajectories to policy $j$}
           \State $\pi_j^{(t)} \leftarrow$ $F\big(\pi_j^{(t - 1)}, \mathcal{D}_{t,j}\big)$ \Comment{Update policy $j$ using algorithm $F$}
       \EndFor
    \EndFor
    \State \textbf{return} $\pi_1^{(T)}, \pi_2^{(T)}, \dots, \pi_N^{(T)}$
\end{algorithmic}
\end{algorithm}

\subsection{Trajectory attribution}\label{sec:attribution}

Another approach to address the problem of objective conflict is through trajectory selection, for instance, removing trajectories that are far from the learned policy \cite{zhang2025scalable}.
This motivates the question of quantifying how individual training trajectories contribute to policy performance, a problem that we term \emph{trajectory attribution}. This complements our theoretical analysis above by providing a data-driven measure of which trajectories help or hinder learning and by improving the interpretability of routed ensemble policies. %

A natural starting point for tackling trajectory attribution is via leave-one-out (LOO) retraining, which measures the change in performance when trajectory $t$ is excluded:
\[ \mathcal{I}_t^{\text{LOO}} = \cV\big([\mathbf{1}]_T\big) - \cV\big([\mathbf{1}]_T - e_t\big), \]
where, for any subset indicator $S \in \{0,1\}^T$, we define $\mathcal{V}(S):=\mathbb{E}_{w\sim \cP}\left[\mathcal{J}\big(\pi_S; w\big)\right],\pi_S := F(\mathcal{D}_S)$, with $\mathcal{D}_S := \{(\tau^{(j)},w_j)\in \mathcal{D}: S_j=1\}.$
Here $\pi_S$ is the policy trained on the subset selected by $S$ using algorithm $F$, $[\mathbf{1}]_T$ is the all-ones vector, and $e_t$ is the $t$-th standard basis vector.

Thus, $\mathcal{I}_t^{\text{LOO}}$ reflects the marginal importance of trajectory $t$. However, exact LOO retraining becomes computationally infeasible as the number of trajectories grows \cite{liidentification}. Moreover, it only captures one-at-a-time effects and misses nonlinear interactions among trajectories.

To address these limitations, Algorithm~\ref{alg_krr} introduces a scalable \emph{kernel-surrogate} approach \cite{liidentification,zhang2026efficient} that approximates the subset-performance function $\mathcal{V}(\cdot)$. This procedure proceeds in three steps. First, we randomly sample $n$ trajectory subsets $S_1,\ldots,S_n$, where each $S_i\in\{0,1\}^T$ indicates which trajectories are included. Second, for each subset $S_i$, we estimate its performance using a surrogate loss $v(S_i)\approx\mathcal{V}(S_i)$. Third, we fit a kernel ridge regression surrogate: we form the radial basis function kernel $\Psi_{ij} =\exp(-\gamma\|S_i-S_j\|^2)$, set $V=[\widehat v(S_1),\ldots,\widehat v(S_n)]^\top$, and compute $\alpha=(\Psi+\lambda \id_n)^{-1}V$. This yields the predictor \[ g_\alpha(S)=\sum_{i=1}^n\alpha_i\exp(-\gamma\|S_i-S\|^2), \text{ for } \mathcal{V}(S). \] Because this kernel model $g_\alpha(S)$ is nonlinear in the subset indicator, it can capture higher-order interactions such as synergy and antagonism among trajectories.
Provided with $g_\alpha(S)$, we could then use this to score each trajectory; for instance, by averaging over all the subsets that include a particular trajectory.
Then, trajectories with the highest scores are selected for downstream applications.

The procedure above is formulated for trajectory attribution. However, the same procedure can be applied to objective attribution by sampling subsets of objectives. Each subset retains only the training contributions associated with its selected objectives. Removing an objective therefore excludes its contribution across all trajectories and allows us to estimate its effect on policy performance.

\paragraph{Implementation.} In online RL, exact retraining for each sampled subset is often impractical because the training data evolve. We therefore estimate the subset performance $v(S_i)$ locally around a reference policy trained on all trajectories. Specifically, we use policy gradients from local rollouts to approximate the parameter update induced by each subset, avoiding the cost of full retraining. The algorithm and the corresponding derivation are presented in Appendix \ref{sec:algorithm}.

The experimental results support this approximation. In Figure~\ref{fig_attr_loo}, Appendix \ref{app_omitted}, we show that the fitted kernel model $g_\alpha([\mathbf{1}]_T-e_t)$ is strongly positively correlated with the LOO scores $\mathcal{I}_t^{\mathrm{LOO}}$ and strongly negatively correlated with the objective conflict coefficient $\Gamma$. Moreover, Algorithm~\ref{alg_krr} achieves a Spearman rank correlation of $0.75$ with the true policy-gradient influence scores, confirming that it identifies influential trajectories.
More details of the approximation procedure are presented in Appendix \ref{sec:algorithm}.

\begin{algorithm}[t!]
\caption{Kernel Surrogates for Trajectory Attribution Using Policy Gradient Estimation}\label{alg_krr}
\textbf{Input:} A multi-objective RL environment with $m$ reward objectives\\
\textbf{Requires:} Subset sampling distribution $\cD$; Number of subsets $n$; $\ell_2$-regularization parameter $\lambda$; A policy learning algorithm $F$\\
\textbf{Output:} A kernel surrogate model $g_\theta: \set{0, 1}^m \rightarrow \real$
\begin{algorithmic}[1]
    \State $\theta^\star \leftarrow$ Learn a meta policy with all the $m$ reward objectives simultaneously using $F$
    \State ${S}_1, \dots, {S}_n \leftarrow$ Sample $n$ subsets from $\cD$ independently
    \State $\widehat{v}({S}_1), \widehat v(S_2), \dots, \widehat v(S_n) \leftarrow$ Apply \acronym{} (Algorithm \ref{alg_1}) with $(S_1, S_2, \dots, S_n; \theta^\star)$
    \State $\Psi_{ij} \leftarrow \exp(-\gamma \bignorm{{S}_{i} - {S}_{j}}^2)$, for all $1\le i, j \le n$ %
    \Comment{Construct kernel matrix} 
    \State ${V} \leftarrow [\widehat v(S_{1}), \ldots, \widehat v(S_{n})]^{\top}$%
    \Comment{Collect estimated outcomes}
    \State $\alpha \leftarrow (\Psi + \lambda \id_n)^{-1} V$%
    \Comment{Fit kernel ridge regression}
    \State \textbf{return} $g_\theta({S}) = \sum_{i=1}^n \alpha_i \exp(-\gamma\|S_i-S\|^2)$
\end{algorithmic}
\end{algorithm}
\section{Experiments}\label{sec:experiment}

Our experiments are designed to address the following two aspects: (1) to verify the theoretical insights from the regret analysis, and (2) to evaluate the empirical performance of our algorithms across multi-objective RL settings, including synthetic environments, continuous robotic control, and RLHF settings.

We highlight three findings. First, routing mitigates negative transfer and reduces cumulative regret in synthetic environments. Second, routed ensembles outperform baselines by a 4.3\% relative margin in multi-task RL, up to 48\% in multi-objective RL, and 17\% in large language model benchmarks. Third, our kernel surrogates improve correlation with ground-truth marginal gains by 50\% for trajectory attribution and improve downstream task-selection reward by 27\%. Our code repository can be found at \href{https://github.com/VirtuosoResearch/Multi-objective-reinforcement-learning-code/}{https://github.com/VirtuosoResearch/Multi-objective-reinforcement-learning-code/}.%

\subsection{Evaluation in synthetic environments}

We validate our theoretical insights in the following two environments (details can be found in Appendix \ref{app_syn_env}).

\emph{1. Learning Pareto frontier (Example~\ref{example_pareto}).} The environment has $m=4$ conflicting objectives. The reward functions are designed such that maximizing one objective necessitates sacrificing the others, creating a complex, high-dimensional Pareto frontier. We sample preference vectors $w$ uniformly from the simplex to evaluate the approximation of the frontier.

\noindent\emph{2. Learning under corrupted examples (Example~\ref{example_adversarial}).} The environment has $m=2$ objectives, where $r_1$ represents a clean task signal and $r_2$ represents a corrupted signal. The target preference is fixed at $[1, 0]$, meaning the agent solves a single-objective problem but is exposed to conflicting gradients from $r_2$ during exploration. The training preference is sampled uniformly in the simplex. 

\noindent\emph{3. Routing under varying goal separation.} We keep the environment dynamics fixed and vary the separation between goals. For each setting, we compute $\Gamma$ from the resulting occupancies and measure the reduction in cumulative regret achieved by routing relative to the comparison method.

For Pareto frontier learning, Figure~\ref{fig_momdp} shows that routed ensemble learning substantially reduces cumulative regret relative to MO-UCBVI~\cite{wu2021accommodating}, using one policy per objective. This suggests that routed ensembles maintain diverse basis policies that cover different regions of the frontier, while the single-policy baseline fails to capture the complex four-way trade-off.

For learning under corrupted examples, Figure~\ref{fig_admdp} shows that our approach mitigates negative transfer from corrupted inputs and achieves near-optimal performance. This suggests that routing can filter out harmful interference and assign trajectories to support optimization of the clean objective $r_1$.

\subsection{Evaluation on robotic control and language models}

Next, we present our evaluations applied to several robotic control and language modeling settings. We first describe the setups.

\paragraph{Environments and models.} We evaluate on four regimes, each with a different source of conflict: (i) \emph{shared dynamics} on the 10-task MT10 benchmark from Meta-World~\cite{yu2020metaworld}; (ii) \emph{explicit Pareto trade-offs} on the continuous MO-HalfCheetah/Hopper/Ant and discrete Deep-Sea-Treasure/Fruit-Tree environments of MO-Gymnasium~\cite{felten_toolkit_2023}; (iii) \emph{preference-induced conflict} on HelpSteer2~\cite{wang2024helpsteer}, with Qwen3-1.7B fine-tuned with PPO via per-basis LoRA adapters; and (iv) \emph{trajectory attribution} on Multitask Minigrid, MT10, and a synthetic synergy MTRL environment (See Appendix~\ref{app_syn_env}). In all regimes, routing is applied as a drop-in module on top of the published base learner.

\begin{figure}[!t]
  \centering
  \begin{subfigure}{0.49\linewidth}
    \centering
    \includegraphics[width=0.95\linewidth]{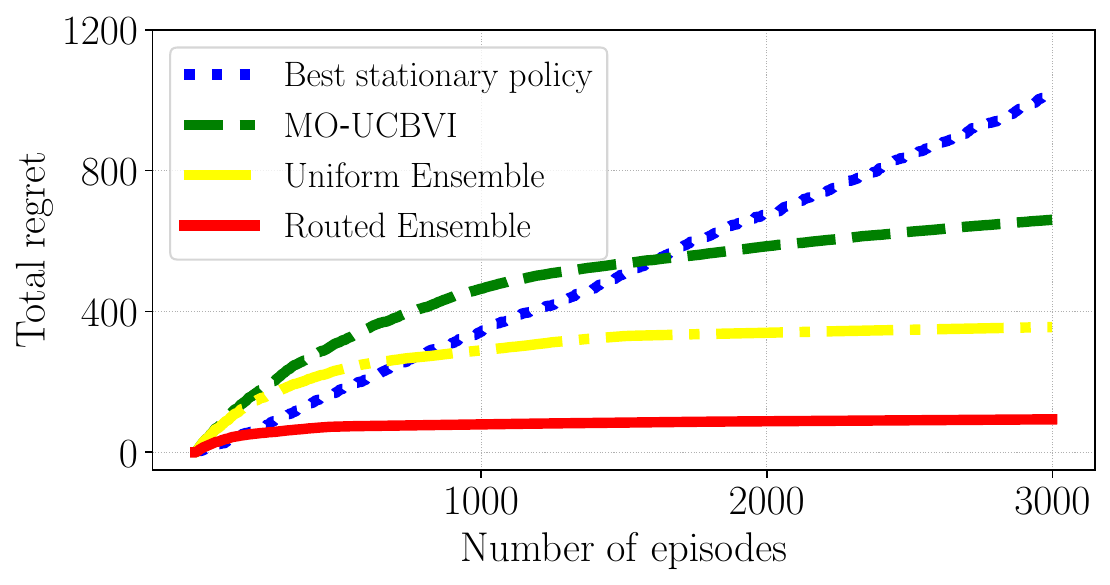}
    \caption{Learning Pareto frontier}
    \label{fig_momdp}
  \end{subfigure}%
  \hfill %
  \begin{subfigure}{0.49\linewidth}
    \centering
    \includegraphics[width=0.95\linewidth]{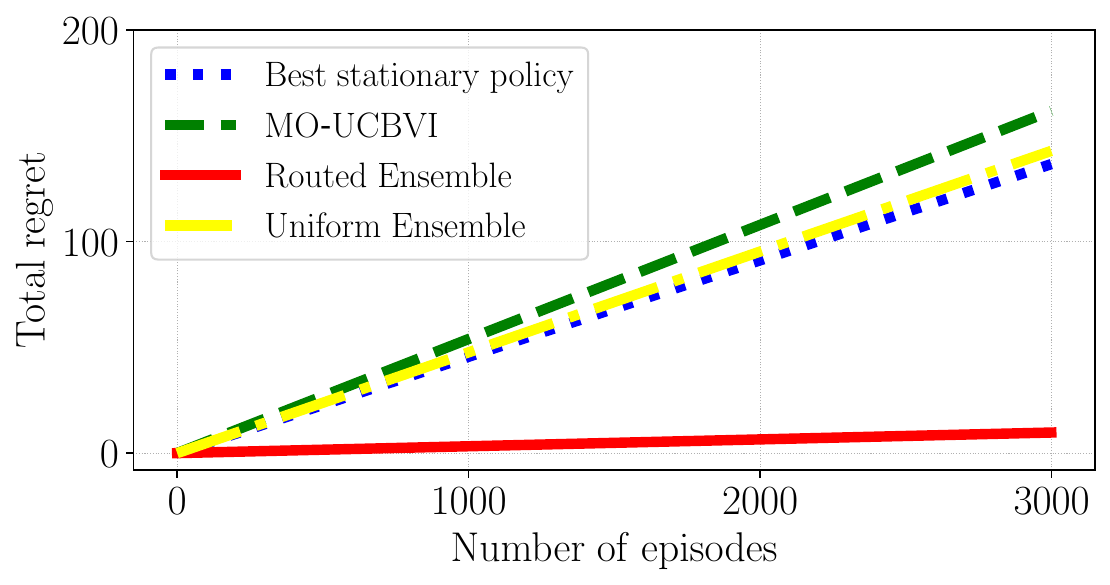}
    \caption{Learning under corrupted examples}
    \label{fig_admdp}
  \end{subfigure}
  \caption{We compare the cumulative regret of routed ensembles (red) against MO-UCBVI
  (green dashed), uniform ensembles (yellow), and the best stationary policy (blue dotted).}\label{fig:momdp-two}
\end{figure}

\paragraph{Baselines.} We compare to the following baselines: (i) For multi-task RL, Soft Modularization~\cite{yang2020multi}, PaCo~\cite{sun2022paco}, CARE~\cite{sodhani2021multi}; (ii) For continuous multi-objective RL, PGMORL~\cite{xu2020prediction}, Q-Pensieve~\cite{hung2022q}, CAPQL~\cite{lu2023multi}, GPI-LS~\cite{alegre2023sample}, and learned-gate MoE; for discrete multi-objective RL, PQL~\cite{van2014multi}, MPMOQL~\cite{van2014multi}, and Envelop~\cite{yang2019generalized}; (iii) For RLHF, a single-policy model; (iv) For trajectory attribution, Influence Functions~\cite{koh2017understanding}, TracIn~\cite{pruthi2020estimating} adapted to online RL setting~\cite{hu2025snapshot}, and linear surrogate models~\cite{liidentification}. The MoE and GPI-LS comparisons hold the base policy-learning algorithm and interaction budget fixed to examine different routing and policy-selection mechanisms. For the first three regimes, we also include \emph{uniform assignment}, which randomly assigns trajectories to basis policies.

\paragraph{Evaluation.} We use the following evaluation metrics: (i) success rate on MT10; (ii) hypervolume (HV) and utility (UT) on continuous MO-Gymnasium, and HV and cardinality (Card.) on discrete MO-Gymnasium; (iii) preference-induced conflict on HelpSteer2~\cite{wang2024helpsteer} and UltraFeedback~\cite{cui2023ultrafeedback}, using Qwen3-1.7B and Llama-3.2-1B fine-tuned with PPO via per-basis LoRA adapters; (iv) for trajectory attribution, linear data modeling score (LDS) and selection performance. LDS is the Spearman rank correlation between predicted and ground-truth marginal gains. Selection performance is the target objective return on a target objective when training within the top-$3$ most influential source tasks selected by each procedure.

\paragraph{Results for policy optimization.} Our proposed routed ensemble Algorithm \ref{alg_routing} consistently outperforms baselines in multi-task and multi-objective settings. The comparison is presented in Table~\ref{tab_mtrl_results}. On MT10, it achieves an average success rate of $85.6\%$, outperforming soft modularization ($82.0\%$) by a relative margin of $4.3\%$. %

Our routing algorithm also improves RLHF performance on both the synthetic environment and real-data benchmarks.
In Table \ref{tab_llm}, we compare Algorithm \ref{alg_routing}, uniform assignment, and learning a single policy for each objective.
Our algorithm consistently increases hypervolume on the induced frontier. In the synthetic environment, our method improves hypervolume by $17\%$ over the single-policy baseline and $14\%$ over the uniform ensemble. Averaged across the two models, the corresponding improvements are $5.0\%$ and $8.2\%$ on HelpSteer2, and $6.2\%$ and $3.0\%$ on UltraFeedback.

Next, we present comparisons in continuous MO-Gymnasium environments in Table~\ref{tab:routing_baselines}. Our routing algorithm yields an $8\%$ average improvement over Q-Pensieve and achieves higher HV than learned-gate MoE and GPI-LS on all three benchmarks. The capacity-matched MoE comparison supports the benefit of trajectory routing beyond additional expert capacity.
These results suggest that routing trajectories across basis policies helps distinguish aligned from conflicting data and approximates the Pareto frontier more efficiently than uniform assignment.

\paragraph{Results for trajectory attribution.} By applying Algorithm \ref{alg_krr} at the objective level, we can achieve higher LDS scores and downstream task-selection performance. The results are presented in Table~\ref{tab_task_selection_reward}. The kernel surrogate improves LDS by $50\%$ over attribution baselines across Multitask-Minigrid and MT10. Moreover, when used to select source tasks for co-training, kernel-surrogate attribution improves the target reward by $27\%$, demonstrating that the learned attributions are useful for downstream policy learning.

\subsection{Ablation studies}\label{sec_abla}

\begin{table*}[t!]
\centering
\caption{Comparison of our approach with several baseline methods. We report the average success rate on Meta-World, HV and UT on MuJoCo, as well as HV and Cardinality on Grid-World.}\label{tab_mtrl_results}%
\resizebox{\textwidth}{!}{\small\begin{tabular}{l|c|ccccc}
\toprule
 & Metrics & Soft Modularization  & PaCo  & CARE & Uniform Assignment & \bf{Algorithm \ref{alg_routing}} \\
\midrule
Meta-World & Success rate &  $82.0_{\pm1.1}\%$ & $73.1_{\pm1.1}\%$ &  $80.2_{\pm1.8}\%$ & $81.3_{\pm1.4}\%$ & $\bm{85.6}_{\pm1.8}\%$ \\ 
\midrule
                    & Metrics & PGMORL & Q-Pensieve & CAPQL & Uniform Assignment & \bf{Algorithm \ref{alg_routing}} \\
\midrule                    
\multirow{2}{*}{MO-HalfCheetah2d} & HV ($\times 10^6$)  & $0.11_{\pm0.08}$ & $4.85_{\pm0.06}$ & $3.29_{\pm0.18}$ & $3.17_{\pm0.11}$ & $\bm{5.17}_{\pm0.16}$ \\
                               & UT ($\times10^2$)      & $0.12_{\pm0.13}$ & $2.14_{\pm0.13}$ & $0.63_{\pm0.09}$ & $1.01_{\pm0.12}$ & $\bm{2.47}_{\pm0.14}$ \\ 
\multirow{2}{*}{MO-Hopper}     &  HV ($\times 10^7$) & $1.23_{\pm0.04}$ & $1.61_{\pm0.32}$ & $1.39_{\pm0.13}$ & $1.56_{\pm0.16}$ & $\bm{1.83}_{\pm0.14}$\\
                               & UT ($\times10^2$) & $0.04_{\pm0.02}$ & $1.47_{\pm0.14}$ & $1.29_{\pm0.12}$ & $1.17_{\pm0.09}$ & $\bm{1.56}_{\pm0.06}$ \\ 
\multirow{2}{*}{MO-Ant}      & HV ($\times 10^7$) & $0.17_{\pm0.11}$ & $1.83_{\pm0.28}$ & $0.84_{\pm0.05}$ & $1.81_{\pm0.15}$ & $\bm{2.03}_{\pm0.16}$\\
                               & UT ($\times10^2$) & $0.21_{\pm0.06}$ & $1.75_{\pm0.02}$& $1.76_{\pm0.16}$ & $1.68_{\pm0.08}$ & $\bm{1.82}_{\pm0.12}$ \\
\midrule
& Metrics & PQL & MPMOQL & Envelop & Uniform Assignment & \bf{Algorithm \ref{alg_routing}} \\
\midrule
\multirow{2}{*}{Deep-Sea-Treasure} & HV ($\times 10^3$) & $0.76_{\pm0.30}$ & $0.25_{\pm0.01}$ & $0.79_{\pm0.00}$ & $0.62_{\pm0.20}$ & $\bm{1.10}_{\pm0.00}$ \\
 & Card. ($\times 10$) & $0.83_{\pm0.15}$ & $0.17_{\pm0.06}$ & $0.10_{\pm0.00}$ & $0.80_{\pm0.11}$ & $\bm{1.00}_{\pm0.00}$ \\ 
\multirow{2}{*}{Fruit-Tree} & HV ($\times 10^3$) & $6.69_{\pm0.95}$ & $1.01_{\pm0.24}$ & $3.90_{\pm0.00}$ & $8.29_{\pm0.38}$ & $\bm{9.30}_{\pm0.00}$ \\
 & Card. ($\times 10$) & $3.40_{\pm0.36}$ & $0.17_{\pm0.06}$ & $3.00_{\pm0.00}$ & $5.47_{\pm0.06}$ & $\bm{6.40}_{\pm0.00}$ \\
\bottomrule
\end{tabular}}
\end{table*}

\begin{table}[t]%
    \centering
    \caption{Comparison results for RLHF settings between our approach, using a single policy, and uniform assignment to the same number of basis policies.}\label{tab_llm}\vspace{0.075in}
    {\small
    \begin{tabular}{lccccc}
        \toprule
        & \multirow{2}{*}{Synthetic}
        & \multicolumn{2}{c}{Qwen3-1.7B}
        & \multicolumn{2}{c}{Llama-3.2-1B} \\
        \cmidrule(lr){3-4}\cmidrule(lr){5-6}
        Method & & HelpSteer2 & UltraFeedback
        & HelpSteer2 & UltraFeedback \\
        \midrule
        Single Policy
        & 0.48 & 0.51 & 0.46 & 0.34 & 0.52 \\
        Uniform Assignment
        & 0.49 & 0.47 & 0.47 & 0.35 & 0.54 \\
        \textbf{Algorithm~\ref{alg_routing}}
        & \textbf{0.56} & \textbf{0.54} & \textbf{0.48}
        & \textbf{0.35} & \textbf{0.55} \\
        \bottomrule
    \end{tabular}}
\end{table}

\textbf{Ensemble policies reduce coverability.} We measure coverability using the union of state-action occupancies in the synthetic environment. A single policy results in a high $\chi^2$-divergence of $1.3$. The routed ensembles distribute exploration, reducing the metric to $0.2$. This suggests that routing mitigates the exploration bottleneck.

\textbf{Varying the number of basis policies in the ensemble.}
We vary $N$ in $\{2, 3, 4, 5\}$ to evaluate the sensitivity of Algorithm \ref{alg_routing} on three MuJoCo benchmarks.
For MO-HalfCheetah-v4, the Hypervolumes are highest at ${5.21} \times 10^6$ when $N= 3$.
For MO-Hopper, it is ${1.87}\times 10^7$ when $N = 3$.
For MO-Ant, it is ${2.12}\times 10^7$ when $N = 5$.
See Appendix \ref{app_omitted} for the illustration of these ablation results.

\textbf{Nonlinear kernels improve trajectory attribution.} We compare linear, quadratic, cubic, and RBF kernels by measuring the fitting loss of the kernel surrogate model in equation~\eqref{eq_surrogate_model_loss} on the synthetic dataset. The linear kernel converges to a loss of $0.2$, while the quadratic, cubic, and RBF kernels all achieve a lower loss of $0.1$. This gap demonstrates that the relationship between objective subsets and policy returns involves high-order interactions that a linear model cannot capture.

\textbf{From occupancy conflict to regret reduction.} Lastly, we evaluate how occupancy conflict relates to regret reduction via routing in Table~\ref{tab:goal_separation}. We vary the separation between two reward goals. As goal separation increases, measured $\Gamma$ rises from $0$ to $1$, while regret reduction from routing rises from $0$ to $36.03$. Across these settings, routing is more beneficial when preference-induced occupancies diverge.
This trend is consistent with Theorem~\ref{theorem:regret}, where larger occupancy conflict increases the conflict-dependent term in the regret bound.

\begin{figure}[t]
  \centering
  \begin{minipage}[t]{0.53\textwidth}
    \centering
    \captionsetup{type=table}
    \caption{Hypervolume comparison with learned-gate MoE and GPI-LS.}
    \label{tab:routing_baselines}
    \small
    \begin{tabular}{lccc}
      \toprule
      Method & HalfCheetah2d & Hopper & Ant \\
      \midrule
      MoE
      & $4.40_{\pm0.09}$ & $1.56_{\pm0.13}$ & $1.60_{\pm0.18}$ \\
      GPI-LS
      & $5.00_{\pm0.15}$ & $1.50_{\pm0.12}$ & $1.85_{\pm0.24}$ \\
      \textbf{Alg.~\ref{alg_routing}}
      & $\mathbf{5.17}_{\pm0.16}$
      & $\mathbf{1.83}_{\pm0.14}$
      & $\mathbf{2.03}_{\pm0.16}$ \\
      \bottomrule
    \end{tabular}
  \end{minipage}\hfill
  \begin{minipage}[t]{0.45\textwidth}
    \centering
    \captionsetup{type=table}
    \caption{Measuring conflict and regret reduction under varying
    goal separation levels.}
    \label{tab:goal_separation}
    \small
    \begin{tabular}{ccc}
      \toprule
      Separation & $\Gamma$ & Regret reduction \\
      \midrule
      0 & $0.00_{\pm0.00}$ & $0.00_{\pm0.00}$ \\
      2 & $0.74_{\pm0.00}$ & $5.77_{\pm0.11}$ \\
      8 & $1.00_{\pm0.00}$ & $36.03_{\pm0.57}$ \\
      \bottomrule
    \end{tabular}
  \end{minipage} 
\end{figure}

\begin{table}[t!]
    \caption{We report the average target reward when the target policy is trained with source tasks selected by each attribution algorithm.}\label{tab_task_selection_reward}\vspace{0.075in}
    {\small\begin{tabular}{lcc|cc|cc}
        \toprule
        \multirow{2}{*}{Methods} & \multicolumn{2}{c|}{Multitask Minigrid} & \multicolumn{2}{c|}{Meta-World MT10}        & \multicolumn{2}{c}{Synergy MTRL} \\
        & LDS & \multicolumn{1}{c|}{Reward} & LDS  & \multicolumn{1}{c|}{Reward} & LDS & \multicolumn{1}{c}{Reward} \\
        \midrule
        Influence functions & $-0.23_{\pm0.08}$ & $0.07_{\pm0.13}$  & $0.71_{\pm0.11}$ & $16.3_{\pm 1.4}$ & $0.04_{\pm0.16}$ & $1.77_{\pm0.67}$ \\
        TracIn & $0.31_{\pm0.06}$ & $0.25_{\pm0.21}$ & $0.45_{\pm0.15}$ & $16.0_{\pm 2.3}$ & $0.16_{\pm0.04}$ & $2.65_{\pm0.03}$ \\
        Linear surrogates & $0.37_{\pm0.12}$ & $0.33_{\pm0.19}$ & $0.42_{\pm0.19}$ & $15.8_{\pm 4.5}$ & $0.16_{\pm0.04}$ & $2.67_{\pm0.01}$\\
        \midrule
        \textbf{Algorithm \ref{alg_krr}} & $\mathbf{0.41}_{\pm0.04}$ & $\mathbf{0.45_{\pm 0.17}}$ & $\mathbf{0.80}_{\pm0.04}$ & $\mathbf{18.8_{\pm 5.6}}$ & $\mathbf{0.91_{\pm 0.01}}$ & $\mathbf{5.85_{\pm 0.01}}$ \\
        \bottomrule
    \end{tabular}}
\end{table}
\section{Related Work}\label{sec_related}

A reinforcement learning environment is typically formalized by means of a Markov decision process \cite{sutton1998reinforcement,agarwal2021theory}.
Existing approaches in the literature on multi-objective reinforcement learning (MORL) generally fall into two categories. The first reduces MORL to single-objective RL through scalarization, optimizing a fixed or sampled preference vector to learn a single policy \cite{gabor1998multi, miettinen2002scalarizing, mannor2004geometric, sodhani2021multi}. The second, which includes our work, examines the multi-objective nature of the problem by learning multiple policies (e.g., to approximate the Pareto frontier) to address sharp variation in optimal policies across preferences \cite{van2014multi,wu2021accommodating,liu2025pareto}. Recent work \cite{zhang2025scalable} proposes a convex relaxation algorithm for partitioning multiple objectives into disjoint, related groups. \emph{We build on this line of work by providing a theoretical characterization of objective conflicts in MORL, introducing a routed ensemble algorithm, and developing trajectory-level attribution to interpret model behavior}.
One interesting future direction is to examine the policy learning dynamics in multi-objective Markov decision processes \cite{kakade2001natural,lan2023policy}.
Lastly, sequential decision-making problems also occur in experimental design, where one often needs to roll out an experiment through staggered adoption \cite{xiong2024data,xiong2025automated}.

Recent work studies both offline and online RL with linear function approximation \cite{golowich2024role,golowich2024linear}. It would be interesting to understand the techniques therein and examine whether it would be possible to extend to learning optimal policies under multiple objectives under linear Bellman completeness.
More broadly, we refer the reader to a recent monograph by \citet{foster2023foundations} for references.

Influence estimation techniques \cite{koh2017understanding,pruthi2020estimating} have recently been extended from supervised learning to reinforcement learning to trace model behaviors. 
In the context of offline RL, \cite{deshmukh2023explaining} proposes a framework to attribute specific policy decisions to past trajectories.
A fundamental challenge in the online RL setting is the circular dependency where current policy updates influence future data distribution. Recent work \cite{hu2025snapshot} addresses this by employing local data attribution via gradient similarity to identify and filter harmful transitions.

\section{Conclusion}\label{sec_conclude}

In this paper, we study how to learn policies that balance competing objectives in multi-objective reinforcement learning. We first analyze the theoretical limits of transfer learning across preferences, highlighting that structural misalignment between objectives is the primary cause of negative transfer. To overcome this challenge, we propose a routed ensemble algorithm guided by a new measure, the objective conflict coefficient. Furthermore, we introduce a kernel-based trajectory-attribution method that captures widespread nonlinear interactions among trajectories and can improve subsequent policy learning. The experimental results demonstrate that our approach achieves superior sample efficiency and attribution accuracy compared to state-of-the-art baselines across diverse benchmarks.

One limitation of our theoretical framework is that the regret bounds do not directly apply to the routed ensemble algorithm. Further understanding the sample complexity of learning ensemble policies in multi-objective Markov decision processes is an interesting question for future work.

\section*{Acknowledgment}

We are grateful to the anonymous reviewers for their constructive comments that greatly improved this paper.
Thanks to Dongruo Zhou for pointing out references to prior work.
This work received financial assistance from NSF award IIS-2412008 and a startup fund from Northeastern University.
Zhenshuo Zhang received additional financial support from Northeastern University's Khoury PhD fellowships.
R. Xiong is supported in part by the URC award at Emory University.

\begin{refcontext}[sorting=nyt]
    \printbibliography
\end{refcontext}

\appendix

\newpage
\section{Policy Gradient Estimation}\label{sec:algorithm}

In Algorithm \ref{alg_krr}, we compute the kernel by minimizing the following objective over the training set:
\begin{equation}\label{eq_surrogate_model_loss} 
    \min_{g_\alpha \in \mathcal{H}} \sum_{i=1}^{n} \big( v({{S}}_{i}) - g_\alpha({{S}}_{i}) \big)^{2} + \lambda \lVert g_\alpha \rVert^{2}_\Psi,
\end{equation}
where $\lambda > 0$ is a regularization parameter and $\Psi$ is an $n \times n$ kernel matrix with entries $\Psi_{i,j} = \psi({{S}}_{i}, {{S}}_{j})$.
Since ${{S}}$ is binary, consider the radial basis function (RBF) kernel:
$\psi({{S}}_{i},{{S}}_{j}) = \exp\big(-\gamma \| {{S}}_{i} - {{S}}_{j} \|^{2}\big).$
By the representer theorem of kernel methods, the minimizer is: \[ g_\alpha({{S}})= \sum_{i=1}^{n} \alpha_i \psi\big({{S}}_{i}, {{S}}\big) \,, \]
where the coefficients $\alpha$ are obtained via $(\Psi + \lambda \id)^{-1}{V}$ with ${V} = [v({{S}}_{1}), \ldots, v({{S}}_{n})]^{\top}$, and $\id$ is the identity matrix. Below we describe how we obtain $v(S_i)$.

\paragraph{First-order approximation.}
We approximate the PPO surrogate objective (See Appendix \ref{app_ppo} for a detailed review of PPO) using a first-order Taylor expansion. Let $\varphi_t = \nabla_\theta \log \pi_\theta(a_t|s_t)|_{\theta=\theta^*}$ be the gradient feature vector. The policy objective is approximated as:
\[J(\theta) \approx \mathbb{E}_t \left[ \widehat{A}_t \cdot (1 + \varphi_t^\top \Delta \theta) \right], \]
where $\Delta \theta = \theta - \theta^*$ and $\widehat{A}_t$ is the advantage function.

Directly maximizing this linear approximation is unbounded. To obtain a robust estimator, we reformulate this optimization as a weighted binary classification problem. Intuitively, we seek an update $\Delta \theta$ that aligns the policy change $\varphi_t^\top \Delta \theta$ with the sign of the advantage $\widehat{A}_t$. By defining a binary label $y_t = \text{sign}(\widehat{A}_t)$ and a sample weight $\eta_t = |\widehat{A}_t|$, we maximize the margin $y_t \varphi_t^\top \Delta \theta$. This is effectively approximated by minimizing the weighted logistic loss, which provides a bounded surrogate for the sign-agreement objective:
\[ v(S_i) := \min_{\Delta \theta} L_{S_j}(\theta^\ast + \Delta \theta) \qquad \text{where } L_{S_j}(\theta^\ast + \Delta \theta) =\frac{1}{|\mathcal{D}_{S_i}|} \sum_{t \in \mathcal{D}_{S_i}} \eta_t \cdot \log \left( 1 + \exp \left( - y_t \tilde{\varphi}_t^\top \Delta \theta \right) \right). \]
This formulation enables rapid estimation of the policy performance for any subset $S_i$ without full retraining. As shown in Table~\ref{table_motivation_result}, the linearization error remains low ($<2\%$) within the trust region.
Another challenge is the high dimensionality of gradient vectors. We address this by applying random Gaussian projections from $\varphi_t$ to a low-dimensional space. By the Johnson–Lindenstrauss Lemma, this projection provably preserves the pairwise similarity between pairs of gradients. Solving the regression problem in the reduced space then takes only a few seconds. %

\begin{table}[h!]
\centering
\caption{We empirically find that the approximation error of $\epsilon$ is negligible for several interactive RL environments. We report the mean and standard deviation of the approximation error across $10$ random subsets. We defer a detailed description of the setup to the experiments section.}\label{table_motivation_result}\vspace{0.075in}
{\small\begin{tabular}{@{}lcccc@{}}
    \toprule
    {Distance} & {MT10} & {CartPole} & {Highway} & {Lunarlander}\\
    \toprule
    $0.1\%$ & $0.01_{\pm0.01}\%$ & $0.12_{\pm0.14}\%$ & $0.02_{\pm0.02}\%$ & $0.06_{\pm0.01}\%$  \\
    $0.5\%$ & $0.43_{\pm0.73}\%$ & $0.73_{\pm0.10}\%$ & $0.11_{\pm0.09}\%$ & $0.03_{\pm0.02}\%$\\
    $1.0\%$ & $0.32_{\pm0.56}\%$ & $0.98_{\pm0.65}\%$ & $2.04_{\pm0.58}\%$  & $0.48_{\pm0.01}\%$\\
    \bottomrule
\end{tabular}}
\end{table}

Specifically, we compute the relative residual sum of squares (RSS) error across several RL environments: MT10 from MetaWorld~\cite{yu2020metaworld}, CartPole, LunarLander, and Highway from Gymnasium~\cite{towers2024gymnasium}. %
In each case, a four-layer MLP policy is trained, and gradients are estimated from 2,048 steps per task.
The results are averaged over $10$ randomly sampled subsets of size $5$. As shown in Table~\ref{table_motivation_result}, the RSS error remains below $2\%$ when $\norm{\theta - \theta^\ast}$ is small with a proper initialization $\theta^\star$, indicating high accuracy in the linear approximation. The error gradually increases to around $10\%$ for perturbations up to $5\%$. These results validate the surrogate model's ability to approximate outcomes near the meta-policy.

\paragraph{Random projection.} We now describe the implementation of the random projection step in our attribution procedure. We sample a random projection matrix $P\in\mathbb{R}^{p\times d}$, where $d$ is smaller than $p$ and every entry of $P$ is independently drawn from a Gaussian distribution $\mathcal{N}(0, d^{-1})$.
We project the high-dimensional gradients to a $d$-dimensional space as \[ \tilde{\varphi}_{i,t}=P^\top \varphi_{i,t}. \]
Then, we solve a logistic regression problem in the $d$-dimensional space with $\theta_d \in \mathbb{R}^d$:
\begin{align}
\widehat{\theta}_d \leftarrow \arg\min_{\theta_d \in \mathbb{R}^d} \frac{1}{|\mathcal{D}_{S_j}|}\sum_{(\varphi, y, w) \in \mathcal{D}_{S_j}} \ell(\tilde{\varphi}, y, w; \theta_d),
\end{align}
for any $j = 1, 2, \dots, m$.
Finally, we map the solution from dimension $d$ back to the original dimension $p$ as
\[ \widehat{\theta}^{(j)} = \theta^\star + P\widehat{\theta}_d. \]

\begin{algorithm}[t!]
\caption{Scalable Kernel Surrogate Estimation using Policy Gradient Estimation (\acronym{})}\label{alg_1}
\textbf{Input:} $n$ subsets $S_1,\dots,S_n$ of $\set{1,\dots,m}$; Initial meta-policy $\pi_{\theta^\star}$\\
\textbf{Require:} Number of episodes $T$;
Projection dimension $d$
\begin{algorithmic}[1]
    \State $\pi_{\theta^\star} \leftarrow$
    Set up the meta-policy for all objectives in $\set{1,\dots,m}$
    \State $P \leftarrow$ A $p$ by $d$ isotropic Gaussian
    random projection matrix with mean zero and variance $\frac 1 d$
    \For{$t=1,\dots,T$}
        \For{$i\in\set{1,\dots,m}$}
            \State $\widetilde{\varphi}_{i,t}
            \leftarrow P^\top
            \left.\nabla_\theta
            \log\pi_\theta(a_{i,t}\mid s_{i,t})
            \right|_{\theta=\theta^\star}$
            \State $\eta_{i,t}\leftarrow|\widehat{A}_{i,t}|$
            \State $y_{i,t}\leftarrow
            \operatorname{sign}(\widehat{A}_{i,t})$
        \EndFor
    \EndFor
    \For{$j=1,\dots,n$}
        \State $\Delta\widehat{\theta}_d^{(j)}
        \leftarrow
        \arg\min_{\Delta\theta_d\in\real^d}
        \widehat{L}_{S_j}
        (\theta^\star+P\Delta\theta_d)$
        \State $\widehat{\theta}^{(j)}
        \leftarrow\theta^\star
        +P\Delta\widehat{\theta}_d^{(j)}$
        \State $\widehat{v}(S_j)
        \leftarrow
        -\widehat{L}_{S_j}(\widehat{\theta}^{(j)})$
    \EndFor
    \State \textbf{return}
    $\widehat{v}(S_1),\dots,\widehat{v}(S_n)$
\end{algorithmic}
\end{algorithm}

Based on the Johnson-Lindenstrauss lemma, when $d = O\Big(\frac{\log p}{\epsilon^2}\Big)$, for any $\varphi_i$ with $\bignorm{\varphi_i} \le G$ and any $\theta, \theta_0$ with $\ell_2$-norm at most $D$, we have
\begin{align*}
    \abs{\inner{g_i}{\theta - \theta_0} - \inner{P g_i}{P (\theta  - \theta_0)}} 
    \le \epsilon \bigabs{\inner{\varphi_i}{\theta - \theta_0}}
    \le {2 G D }{\epsilon}.
\end{align*}
This ensures that the random projection step will preserve the accuracy of the gradient estimation procedure.
See Algorithm \ref{alg_1} for the complete description.

\paragraph{Correlation between attribution scores and LOO analysis.} The goal of this experiment is to evaluate whether an attribution method can predict which trajectory subsets are more useful for policy improvement.
For each task, we first collect a fixed pool of trajectories and sample multiple trajectory subsets from that pool. Each subset is used under the same update protocol to estimate its true downstream effect on the policy. The resulting subset performance can be measured by hypervolume.

Given these true subset effects, each attribution method assigns a predicted score to every subset. We then compare the ranking induced by the predicted scores with that induced by the true subset effects. The LDS score is the Spearman rank correlation between these two rankings. A higher LDS means the attribution method better identifies which trajectory subsets are actually more beneficial.

In Figure \ref{fig_attr_loo}, we illustrate the correlation between the kernel surrogate attribution scores and leave-one-out analysis results.
The results show that these two are strongly correlated across different tests.

\begin{figure}[t]
    \centering
    \begin{subfigure}{0.48\linewidth}
        \centering
        \includegraphics[width=0.60\linewidth]{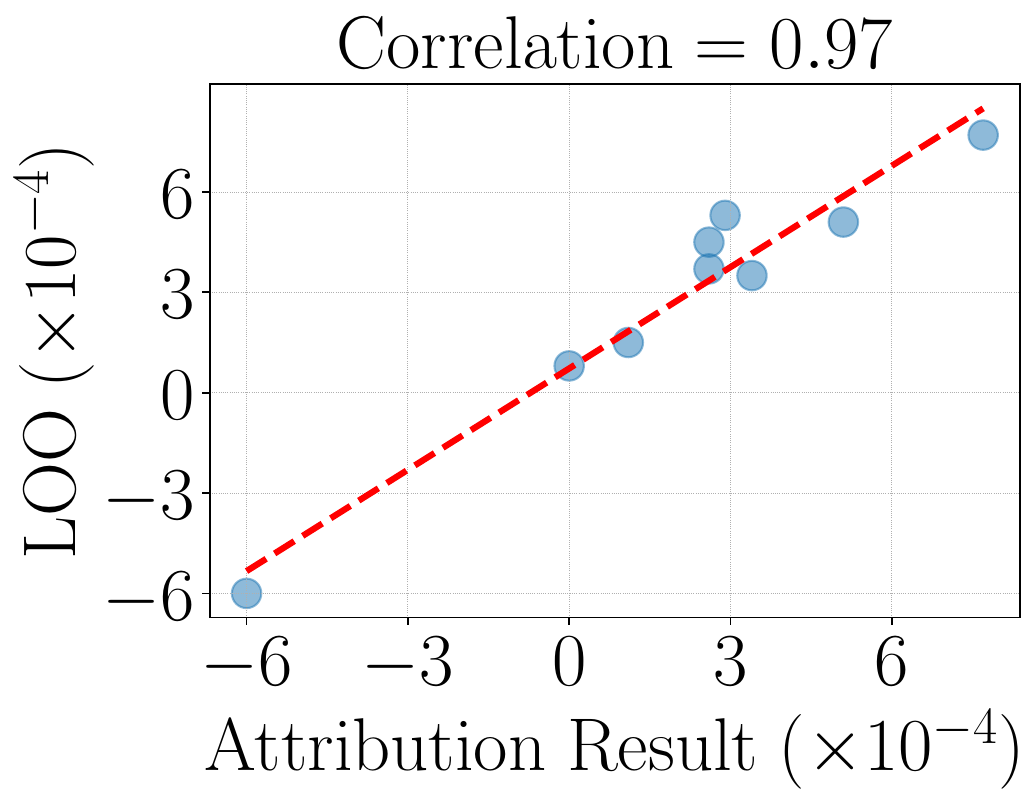}
        \caption{Trajectory-level attribution.}
    \end{subfigure}
    \begin{subfigure}{0.48\linewidth}
        \centering
        \includegraphics[width=0.60\linewidth]{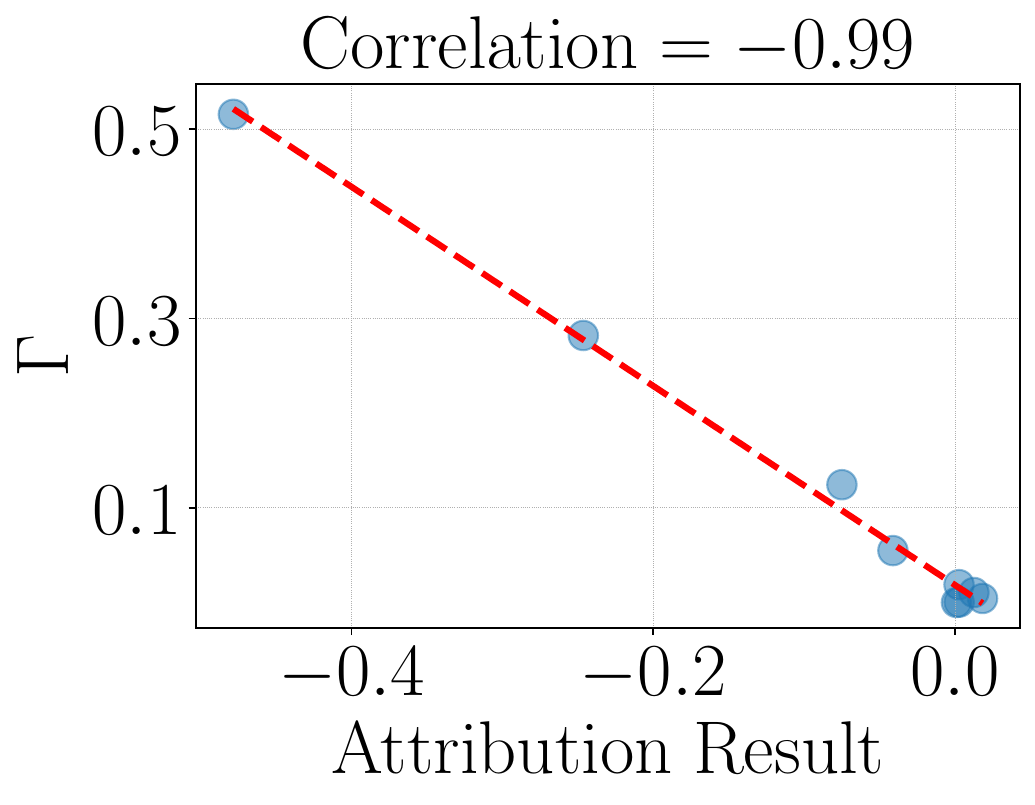}
        \caption{Objective-level attribution.}        
    \end{subfigure}
    \vspace{-0.00in}
    \caption{Correlation analysis of attribution scores against LOO scores (left) and objective conflict coefficient $\Gamma$ (right).}\label{fig_attr_loo}
\end{figure}

\begin{remark}
We describe several natural extensions of our approach.
First, our algorithms naturally extend to online multitask RL \cite{zhang2025scalable}, where the state-action space varies across tasks. %
We leave further exploration of these benchmarks to future work.

Second, kernel surrogates can also be used to measure the influence of an individual trajectory on the overall model behavior.
This can be accomplished by sampling subsets of trajectories instead of objectives in the above algorithm.

Lastly, while we primarily focus on probabilistic policies, similar ideas can be extended to deterministic policies.
\end{remark}

\begin{remark}
Our problem formulation relates to multi-objective optimization in prior literature \cite{papadimitriou2000approximability,chen2024mosh}, and we extend this line of work to the online learning setting. Further exploring the connection between ensemble learning and online boosting \cite{beygelzimer2015optimal} is another direction for future work.

Our work connects to the literature on influence estimation and mechanistic interpretability, which aims to trace model behavior back to influential training data. Prior work include influence functions~\cite{koh2017understanding}, TracIn~\cite{pruthi2020estimating} and linear surrogate models~\cite{liidentification}. In reinforcement learning, recent efforts have adapted these techniques to explain policy decisions, including trajectory-level attribution in offline RL~\cite{deshmukh2023explaining} and snapshot influence functions for filtering harmful data in online RL~\cite{hu2025snapshot}. One of the open questions in this area is to develop better theoretical characterizations of attribution methods.
Our kernel-based objective attribution thus offers a principled approach and accurately estimates second-order interactions across preferences.
This connection seems particularly relevant in light of recent advances in RL reasoning \cite{bateni2025algorithmic}.
\end{remark}

\section{Complete Proofs}\label{sec_proof}

In this section, we present the proof of Theorem \ref{theorem:regret} in Appendix \ref{app:proof}.
We introduce a lower bound in Appendix \ref{app:lower_bound} that matches the upper bound.
Next, we analyze the magnitude of $\Gamma$ in Algorithm \ref{subsec:algorithm-3-analysis}.
Finally, in Appendix \ref{app:approx-realizability}, we present a relaxation when only an approximate low-rank basis assumption holds.

\subsection{Proof of Theorem \ref{theorem:regret}}\label{app:proof}

This section gives a complete proof of Theorem~\ref{theorem:regret}. The proof has four key steps.
Lemma~\ref{lemma:routed-optimism} reduces cumulative regret to a sum of on-policy Bellman residuals.
$\bar d_h$ splits each residual into a \emph{conflict factor} (the $\chi^2$ deviation of the deployed occupancy from $\bar d_h$) and an \emph{in-sample factor} (the squared Bellman residual measured under $\bar d_h$).
Lemma~\ref{lemma:obj-conflict} uses $\Gamma$, and Lemma~\ref{lemma:weight-regularity} uses the schedule regularity in Assumption~\ref{assm:mixture_stability}.
Lemma~\ref{lemma:in-sample-witness} converts the witness-side error into the confidence radius $\beta$. Steps 3 and 4 give a $\sqrt{T}$ stable-phase rate, while the burn-in phase contributes $\mathcal{O}(H t_0)$ with $t_0=\mathcal{O}(C_\cov K\beta)$.

\paragraph{Notations.}
We analyze the {pooled-confidence} version of the routed optimistic algorithm. At the beginning of episode $t$, each basis class $\cF_k$ has a confidence set $\cF_k^{(t-1)}(\beta)$ constructed from the pooled history $\cD^{(t-1)}=\{(\tau^{(i)},w_i):i<t\}$, meaning that the regression residual is computed against \emph{every} past trajectory, not only the trajectories routed to basis $k$. Routing determines which optimistic basis is executed; the confidence guarantee used below controls the Bellman residual of the selected function relative to the cumulative occupancy across all previous episodes. After observing $w_t$, the learner chooses
\begin{equation}\label{eq:routing-rule}
	(k_t,\,f^{(t)}) \in \arg\max_{k\in[K],\,f\in\cF_k^{(t-1)}(\beta)} V_1^f(s_0;w_t),~ V_h^f(s;w):=\max_{a} f_h(s,a;w).
\end{equation}
The deployed policy $\pi^{(t)}$ is greedy with respect to $f^{(t)}$ under $w_t$, with state-action occupancy $d_h^{(t)}:=d_h^{\pi^{(t)}}$. Write the scalarized Bellman residual of the selected optimistic function as
\[
\delta_h^{(t)}(s,a) = f_h^{(t)}(s,a;w_t) - (\cT_h^{w_t} f_{h+1}^{(t)})(s,a),
\]
with the convention $f_{H+1}^{(t)}\equiv 0$. Finally, define the basis-mixture witness $\bar d_h := K^{-1}\sum_{k=1}^K d_h^{\pi^\star_{\basis k}}$ and the un-normalized cumulative empirical occupancy $\bar d_h^{(t)} := \sum_{i=1}^{t-1} d_h^{(i)}$.

Lastly, recall that the $\chi^2$-divergence between two distributions $P(x)$ and $Q(x)$ is defined as \[ \chi^2(P\|Q)=\sum_{x:Q(x)>0}\frac{(P(x)-Q(x))^2}{Q(x)}. \]

\paragraph{Proof structure.}
The proof relies on two events holding simultaneously, each with probability at least $1-\delta/2$ (so that their intersection holds with probability at least $1-\delta$).
For ${E_{\mathrm{opt}}}$, every realizable $Q^{\pi^\star_{\basis k}}(\cdot;w)$ remains in its confidence set $\cF_k^{(t-1)}(\beta)$ for all relevant $t,k,w$. This is the standard optimism event for confidence-set value iteration, union-bounded over the $K$ basis classes.
As for ${E_{\mathrm{reg}}}$, for every selected $f^{(t)}$, the cumulative squared Bellman residual on past trajectories is bounded by $\beta$. This is the standard in-sample regression event, again, union-bounded over the $K$ basis classes.
Next, we work on the intersection of $E_{\mathrm{opt}}$ and $E_{\mathrm{reg}}$.

\begin{lemma}\label{lemma:routed-optimism}
	On the event $E_{\mathrm{opt}}$, for every episode $t$, \[ V_1^{f^{(t)}}(s_0;w_t) \ge \mathcal{J}(\pi^\star_{w_t};w_t). \]
\end{lemma}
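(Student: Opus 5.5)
The argument is the standard optimism argument for confidence-set value iteration, run over the $K$ basis classes through the joint maximization in the routing rule \eqref{eq:routing-rule}. Fix an episode $t$ and its preference $w_t$. By Assumption~\ref{assm:ref_completeness}, the optimal occupancy $d_h^{\pi^\star_{w_t}}$ lies in the convex hull of the basis occupancies, with mixing vector $\nu_{w_t}\in\Delta^{K-1}$. Because the scalarized return is linear in occupancy, exactly as in the computation \eqref{eq_score} in the proof of Lemma~\ref{lem_routing}, we get
\[
\mathcal{J}(\pi^\star_{w_t};w_t)=\sum_{k=1}^K \nu_{w_t}(k)\,\mathcal{J}(\pi^\star_{\basis k};w_t)\le \max_{k\in[K]} \mathcal{J}(\pi^\star_{\basis k};w_t).
\]
Call the maximizer $k^\star$. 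Since $\pi^\star_{w_t}$ is optimal for $w_t$, the inequality is in fact an equality. This step reduces the comparison with the preference-specific optimum to a comparison with a single basis policy.

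Next I use the event $E_{\mathrm{opt}}$. Under it, the realizable function $Q^{\pi^\star_{\basis k^\star}}(\cdot;w_t)$ lies in $\cF_{k^\star}^{(t-1)}(\beta)$. Its value at the initial state satisfies
\[
\max_a Q_1^{\pi^\star_{\basis k^\star}}(s_0,a;w_t)\ge \mathbb{E}_{a\sim\pi^\star_{\basis k^\star}}\bigl[Q_1^{\pi^\star_{\basis k^\star}}(s_0,a;w_t)\bigr]=\mathcal{J}(\pi^\star_{\basis k^\star};w_t).
\]
If $\rho$ is not a point mass, I would replace $s_0$ by the expectation over $s_0\sim\rho$ throughout; this follows the paper's notation $V_1^f(s_0;w)$ and changes nothing else.

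Finally, the routing rule \eqref{eq:routing-rule} maximizes $V_1^f(s_0;w_t)$ jointly over $k\in[K]$ and $f\in\cF_k^{(t-1)}(\beta)$. The pair $(k^\star, Q^{\pi^\star_{\basis k^\star}})$ is feasible, so
\[
V_1^{f^{(t)}}(s_0;w_t)\ge V_1^{Q^{\pi^\star_{\basis k^\star}}}(s_0;w_t)\ge \mathcal{J}(\pi^\star_{\basis k^\star};w_t)=\mathcal{J}(\pi^\star_{w_t};w_t).
\]
This chain proves the lemma.

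The main obstacle is to make sure the optimism event indeed contains the $Q$-function of the basis that is optimal for the current $w_t$. This must hold for the observed $w_t$, even though preferences enter the function class as an argument. The bridge from $\pi^\star_{w_t}$ to a single basis, given by Assumption~\ref{assm:ref_completeness} and linearity, is what lets optimism over only $K$ classes suffice. I would state carefully that $E_{\mathrm{opt}}$ is defined uniformly over $w$, or at least over the realized $w_t$, with the union bound over $k$ already absorbed into $\beta$. No concentration argument is needed inside the lemma itself.
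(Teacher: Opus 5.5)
Your proof is correct and follows essentially the same route as the paper: Assumption~\ref{assm:ref_completeness} plus linearity reduce $\mathcal{J}(\pi^\star_{w_t};w_t)$ to a single best basis value, $E_{\mathrm{opt}}$ places $Q^{\pi^\star_{\basis k^\star}}(\cdot;w_t)$ in $\cF_{k^\star}^{(t-1)}(\beta)$, and restricting the joint argmax in \eqref{eq:routing-rule} to that feasible pair gives the bound. One small difference is in your favor: you use the inequality $\max_a Q_1^{\pi}(s_0,a;w_t)\ge \mathcal{J}(\pi;w_t)$, whereas the paper asserts the equality $V_1^{Q^{\pi^\star_{\basis k^\star}}}(s_0;w_t)=\mathcal{J}(\pi^\star_{\basis k^\star};w_t)$, which tacitly requires the basis policy to be greedy with respect to its own $Q$-function under $w_t$.
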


\begin{proof}
Assumption~\ref{assm:ref_completeness} expresses the occupancy of $\pi^\star_w$ as a convex combination of the basis occupancies. Since the scalarized value is linear in the occupancy measure,
\[ \mathcal{J}(\pi^\star_w; w) =\sum_k \nu_w(k)\mathcal{J}(\pi^\star_{\basis k};w). \]
By Lemma~\ref{lem_routing}, there exists $k^\star(w_t)\in[K]$ such that the optimal MORL policy and the optimal basis policy at index $k^\star(w_t)$ have the same scalarized return: $\mathcal{J}(\pi^\star_{w_t};w_t)=\mathcal{J}(\pi^\star_{\basis k^\star(w_t)};w_t)$. 

By event $E_{\mathrm{opt}}$, the corresponding true Q-function $Q^{\pi^\star_{\basis k^\star(w_t)}}(\cdot;w_t)$ lies in $\cF_{k^\star(w_t)}^{(t-1)}(\beta)$. Restricting the joint argmax in \eqref{eq:routing-rule} to $k=k^\star(w_t)$ then yields
\begin{align*}
		V_1^{f^{(t)}}(s_0;w_t)
		\;&\ge\; \max_{f\in\cF_{k^\star(w_t)}^{(t-1)}(\beta)}V_1^f(s_0;w_t)
		\;\ge\; V_1^{Q^{\pi^\star_{\basis k^\star(w_t)}}}(s_0;w_t) \\
		\;&=\; \mathcal{J}(\pi^\star_{\basis k^\star(w_t)};w_t)
		\;=\; \mathcal{J}(\pi^\star_{w_t};w_t). \qedhere
	\end{align*}
\end{proof}

Lemma~\ref{lemma:routed-optimism} allows us to upper bound regret by the gap between the optimistic value and the value actually obtained by the greedy policy. The usual telescoping identity for finite-horizon Bellman residuals gives
\begin{align}
    \mathsf{Reg}(T)
    &\le \sum_{t=1}^T \left[V_1^{f^{(t)}}(s_0;w_t)-\mathcal{J}(\pi^{(t)};w_t)\right] \notag\\
    &=  \sum_{t=1}^T\sum_{h=1}^H \mathbb{E}_{(s,a)\sim d_h^{(t)}}\!\left[\delta_h^{(t)}(s,a)\right].
    \label{eq:opt-decomp}
\end{align}
The last equality follows from the standard layerwise telescoping along the trajectory of $\pi^{(t)}$. Because $\pi^{(t)}$ is greedy with respect to $f^{(t)}$, at each step $h$, $V_h^{f^{(t)}}(s_h;w_t)=f_h^{(t)}(s_h,a_h;w_t)$. Subtracting the scalarized reward plus the next optimistic value at each layer telescopes from $V_1^{f^{(t)}}(s_0;w_t)$ down to the realized scalarized return, leaving exactly the expected sum of $\delta_h^{(t)}$.

Let us analyze the term $\mathbb{E}_{(s,a)\sim d_h^{(t)}}\left[\delta_h^{(t)}(s,a)\right]$ in the regret upper bound in equation \eqref{eq:opt-decomp}. For each $t$ and $h$, multiply and divide by $\sqrt{\bar d_h(s,a)}$ inside the inner product:
\begin{align}
	\mathbb{E}_{d_h^{(t)}}[\delta_h^{(t)}]	= \sum_{s,a}\frac{d_h^{(t)}(s,a)}{\sqrt{\bar d_h(s,a)}}\cdot \sqrt{\bar d_h(s,a)}\delta_h^{(t)}(s,a) 
	\le {\sqrt{\phi_h(d_h^{(t)})}} \cdot {\sqrt{\mathbb{E}_{\bar d_h} \big[(\delta_h^{(t)})^2\big]}}.
	\label{eq:cs-witness}
\end{align}

Notably the conflicting factor $\phi_h(d)$ is connected with the $\chi^2$-divergence:
\[\phi_h(d) := \sum_{s,a}\frac{d(s,a)^2}{\bar d_h(s,a)} = 1+\chi^2(d\|\bar d_h). \]

The two factors in equation \eqref{eq:cs-witness} are controlled separately: the conflict factor by Assumption~\ref{assm:mixture_stability} and the in-sample factor by combining the witness-comparison bound with the pooled-regression guarantee.

\begin{lemma}\label{lemma:obj-conflict}
Under Assumption~\ref{assm:mixture_stability}, for every $t$ and $h$,
\[ \phi_h(d_h^{(t)}) := \sum_{s,a}\frac{d_h^{(t)}(s,a)^2}{\bar d_h(s,a)} =1+\chi^2(d_h^{(t)}\|\bar d_h) \le 1+\Gamma. \]
\end{lemma}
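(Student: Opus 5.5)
The plan is to write the deployed occupancy as a mixture of the basis occupancies, using Assumption~\ref{assm:mixture_stability}, and then apply convexity of the map $d \mapsto \phi_h(d)$. Fix $t$ and $h$. Assumption~\ref{assm:mixture_stability} gives $d_h^{(t)}(s,a)=\sum_{k}\nu^{(t)}(k)\,d_h^{\pi^\star_{\basis k}}(s,a)$ with $\nu^{(t)}\in[0,1]^K$.

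\emph{Step 1: the weights lie in the simplex.} Sum the identity over $(s,a)$. Both $d_h^{(t)}$ and each $d_h^{\pi^\star_{\basis k}}$ are probability distributions on $\mathcal{S}\times\mathcal{A}$, so $\sum_k \nu^{(t)}(k)=1$ and hence $\nu^{(t)}\in\Delta^{K-1}$. This is needed because the assumption only states $\nu^{(t)}\in[0,1]^K$, and convexity (Jensen) requires genuine convex weights.

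\emph{Step 2: the divergence is well defined.} On any pair $(s,a)$ with $\bar d_h(s,a)=0$, every basis occupancy vanishes, since $\bar d_h$ is their average with nonnegative terms. The mixture $d_h^{(t)}$ therefore also vanishes there. So $d_h^{(t)}\ll \bar d_h$, and the identity $\phi_h(d)=1+\chi^2(d\|\bar d_h)$ holds with the sum taken over $\{\bar d_h>0\}$. The same holds for each $d_h^{\pi^\star_{\basis k}}$.

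\emph{Step 3: convexity.} For each fixed $(s,a)$ with $\bar d_h(s,a)>0$, the map $x\mapsto x^2/\bar d_h(s,a)$ is convex in $x$. Summing over $(s,a)$ shows that $\phi_h$ is convex on distributions absolutely continuous w.r.t.\ $\bar d_h$. Jensen's inequality with the convex weights $\nu^{(t)}$ then gives
\[
\phi_h(d_h^{(t)})\le \sum_{k=1}^K \nu^{(t)}(k)\,\phi_h\big(d_h^{\pi^\star_{\basis k}}\big)=1+\sum_{k}\nu^{(t)}(k)\,\chi^2\big(d_h^{\pi^\star_{\basis k}}\,\big\|\,\bar d_h\big).
\]
By Definition~\ref{def:obj_conflict_coef}, each $\chi^2$ term is at most $\Gamma$, and the weights sum to one. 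Hence $\phi_h(d_h^{(t)})\le 1+\Gamma$.

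The argument is essentially routine. The only point needing care is Step 1: confirming that the mixture weights are normalized. Without normalization, the bound would pick up a factor $(\sum_k\nu^{(t)}(k))^2$. Normalization follows from total mass, as above.
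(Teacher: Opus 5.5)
Your proof is correct and follows the paper's argument. You write $d_h^{(t)}$ as a mixture of the basis occupancies via Assumption~\ref{assm:mixture_stability}, use convexity of the weighted squared norm $\phi_h$, and apply Jensen together with Definition~\ref{def:obj_conflict_coef}; your Steps 1 and 2 (that $\sum_k\nu^{(t)}(k)=1$ by total mass, and that $d_h^{(t)}\ll\bar d_h$) make explicit two points the paper's proof uses silently, since the assumption only states $\nu^{(t)}\in[0,1]^K$.
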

\begin{proof}
By Assumption~\ref{assm:mixture_stability}, $d_h^{(t)}=\sum_k\nu^{(t)}(k)d_h^{\pi^\star_{\basis k}}$. The map
\[ d\mapsto \phi_h(d)=\sum_{s,a}\frac{d(s,a)^2}{\bar d_h(s,a)} \]
is a squared norm in the measure weighted by $\bar d_h^{-1}$, hence convex. Jensen's inequality gives
\[ \phi_h(d_h^{(t)})
    \le \sum_k \nu^{(t)}(k)\phi_h(d_h^{\pi^\star_{\basis k}})
    \le 1+\Gamma, \]
where the last step is Definition~\ref{def:obj_conflict_coef}.
\end{proof}

\begin{lemma}\label{lemma:weight-regularity}
	Under Assumption~\ref{assm:mixture_stability} (with constants $c_0\in(0,1]$ and burn-in time $t_0\in\mathbb{N}$), for every $t\ge t_0$, $h\in[H]$, and $(s,a)$,
	\[	\bar d_h^{(t)}(s,a) \;\ge\; \frac{c_0\,(t-1)}{C_\cov}\,\bar d_h(s,a).	\]
\end{lemma}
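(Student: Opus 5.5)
The plan is to expand the cumulative occupancy $\bar d_h^{(t)}$ using the mixture representation in Assumption~\ref{assm:mixture_stability}, and then drop all but one basis term. For each $i<t$ the assumption gives $d_h^{(i)}=\sum_{k=1}^K \nu^{(i)}(k)\,d_h^{\pi^\star_{\basis k}}$. Summing over $i=1,\dots,t-1$ yields
\[
\bar d_h^{(t)}(s,a)=\sum_{k=1}^K\Big(\sum_{i=1}^{t-1}\nu^{(i)}(k)\Big)d_h^{\pi^\star_{\basis k}}(s,a)=(t-1)\sum_{k=1}^K \overline{\nu}^{(t)}(k)\,d_h^{\pi^\star_{\basis k}}(s,a).
\]
Occupancies are nonnegative, so each coefficient can be lower bounded by $\min_k \overline{\nu}^{(t)}(k)$. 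For $t\ge t_0$, the assumption gives $\min_k \overline{\nu}^{(t)}(k)\ge c_0/(C_\cov K)$. Therefore
\[
\bar d_h^{(t)}(s,a)\ge (t-1)\frac{c_0}{C_\cov K}\sum_{k=1}^K d_h^{\pi^\star_{\basis k}}(s,a)=\frac{c_0(t-1)}{C_\cov}\,\bar d_h(s,a),
\]
where the last step uses $\bar d_h=K^{-1}\sum_k d_h^{\pi^\star_{\basis k}}$ from Definition~\ref{def:obj_conflict_coef}.

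This holds pointwise for every $(s,a)$ and every $h$, since the mixture identity in Assumption~\ref{assm:mixture_stability} is assumed for all $(s,a)$ and all layers.

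I expect essentially no obstacle here; the argument is a direct bookkeeping consequence of the assumption. The one point needing care is consistency of the mixture weights across layers $h$. I read Assumption~\ref{assm:mixture_stability} as supplying a single $\nu^{(t)}$ valid for all $h$, as its phrase ``for all $(s,a)$ and $H$'' indicates. If the weights were allowed to depend on $h$, I would instead apply the same argument layer by layer with $h$-dependent weights, assuming the lower bound on $\overline{\nu}^{(t)}$ holds uniformly in $h$.

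I would also note the role of nonnegativity. Assumption~\ref{assm:mixture_stability} only places $\nu^{(t)}$ in $[0,1]^K$ rather than on the simplex, and nonnegativity is all that discarding nothing, and bounding each coefficient below, requires.
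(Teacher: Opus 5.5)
Your proof is correct and follows the paper's argument: both write $\bar d_h^{(t)}(s,a)/(t-1)=\sum_{k}\overline{\nu}^{(t)}(k)\,d_h^{\pi^\star_{\basis k}}(s,a)$, lower-bound each averaged weight by $c_0/(C_\cov K)$ using nonnegativity of the occupancies, and identify $K^{-1}\sum_k d_h^{\pi^\star_{\basis k}}$ with $\bar d_h$. Your opening phrase about dropping all but one basis term does not describe what you then do, since you keep every term and bound each coefficient below, but the displayed argument is right.
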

\begin{proof}
The cumulative empirical occupancy can be expanded in terms of the basis occupancies:
\[ \frac{\bar d_h^{(t)}(s,a)}{t-1}
    =\sum_{k=1}^K \overline\nu^{(t)}(k)d_h^{\pi^\star_{\basis k}}(s,a)
    \ge \frac{c_0}{C_\cov K}\sum_{k=1}^K d_h^{\pi^\star_{\basis k}}(s,a)
    =\frac{c_0}{C_\cov}\bar d_h(s,a). \]
The inequality is exactly the stable-phase lower bound in Assumption~\ref{assm:mixture_stability}. Thus after burn-in, the data distribution accumulated by the algorithm pointwise dominates the uniform basis mixture up to the factor $C_\cov/c_0$.
\end{proof}

\begin{lemma}\label{lemma:in-sample-witness}
On event $E_{\mathrm{reg}}$, for every $t\ge t_0$ and $h\in[H]$,
\[ \mathbb{E}_{\bar d_h}\!\left[(\delta_h^{(t)})^2\right] \le \frac{C_\cov\,\beta}{c_0(t-1)}. \]
\end{lemma}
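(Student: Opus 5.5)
The plan is a change of measure from the basis-mixture witness $\bar d_h$ to the un-normalized cumulative occupancy $\bar d_h^{(t)}$, followed by the pooled-regression guarantee encoded in $E_{\mathrm{reg}}$. The pointwise domination comes from Lemma~\ref{lemma:weight-regularity}. The guarantee says that the selected optimistic function $f^{(t)}\in\cF_{k_t}^{(t-1)}(\beta)$ has small cumulative squared Bellman residual on all past trajectories. The pooling is what lets us compare against $\bar d_h^{(t)}$ rather than only against the trajectories routed to basis $k_t$.

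\textbf{Step 1 (change of measure).} Fix $t\ge t_0$ and $h$. Lemma~\ref{lemma:weight-regularity} gives, for every $(s,a)$,
\[
\bar d_h(s,a)\le \frac{C_\cov}{c_0(t-1)}\,\bar d_h^{(t)}(s,a).
\]
Since $(\delta_h^{(t)})^2\ge 0$, multiply by $(\delta_h^{(t)}(s,a))^2$ and sum over $(s,a)$. This yields
\[
\mathbb{E}_{\bar d_h}\big[(\delta_h^{(t)})^2\big]\le \frac{C_\cov}{c_0(t-1)}\sum_{s,a}\bar d_h^{(t)}(s,a)\,\delta_h^{(t)}(s,a)^2=\frac{C_\cov}{c_0(t-1)}\sum_{i=1}^{t-1}\mathbb{E}_{d_h^{(i)}}\big[(\delta_h^{(t)})^2\big].
\]

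\textbf{Step 2 (invoke $E_{\mathrm{reg}}$).} On $E_{\mathrm{reg}}$, the cumulative on-policy squared Bellman residual of $f^{(t)}$ under the past occupancies is at most $\beta$:
\[
\sum_{i<t}\mathbb{E}_{d_h^{(i)}}\big[(f_h^{(t)}-\cT_h^{w_t}f_{h+1}^{(t)})^2\big]\le\beta .
\]
Substituting this into Step 1 gives the claimed bound $C_\cov\beta/(c_0(t-1))$.

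\textbf{Main obstacle.} The delicate point is Step 2, namely matching exactly what $E_{\mathrm{reg}}$ controls. The standard GOLF-style guarantee of \cite{jin2021bellman} bounds the residual with the Bellman operator evaluated under the preference $w_i$ of each past episode. Here the residual $\delta_h^{(t)}$ is taken under the current preference $w_t$. The argument must use the fact that $\cF_k$ consists of preference-conditioned functions on $\cS\times\cA\times\Omega$, and that the pooled regression (with completeness for $\cF_k$ over $w$) is stated uniformly in the evaluation preference.

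Absent that, I would define $E_{\mathrm{reg}}$ directly in the form used above. The justification is Freedman's inequality plus a union bound over $K$, $|\cF|$ and $T$, which explains why $\beta=c_1\log(K|\cF|T/\delta)$. The same concentration step also converts the realized squared losses on the trajectories $\tau^{(i)}$ into expectations under $d_h^{(i)}$, at the cost of absorbing constants into $c_1$. Everything else in the proof is the pointwise inequality and linearity of expectation.
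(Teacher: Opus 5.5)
Your proof is correct and follows the paper's argument: the paper also multiplies the pointwise domination $\bar d_h\le \frac{C_\cov}{c_0(t-1)}\bar d_h^{(t)}$ from Lemma~\ref{lemma:weight-regularity} by $(\delta_h^{(t)})^2$ and closes with the pooled-regression bound $\mathbb{E}_{\bar d_h^{(t)}}[(\delta_h^{(t)})^2]\le\beta$, which it simply states first. You are right to flag that the residual is taken under $w_t$ rather than each past $w_i$; the paper does not justify this either, and handles it as your fallback does, by defining $E_{\mathrm{reg}}$ directly as that bound on the selected $f^{(t)}$.
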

\begin{proof}
The standard regression guarantee for optimistic confidence sets, union bounded over $k\in[K]$, gives for the selected $f^{(t)}$ (i.e., on the event $E_{\mathrm{reg}}$),
\begin{equation}\label{eq:in-sample-empirical}
    \sum_{i=1}^{t-1}\mathbb{E}_{d_h^{(i)}}\!\left[(\delta_h^{(t)})^2\right]
    =\mathbb{E}_{\bar d_h^{(t)}}\!\left[(\delta_h^{(t)})^2\right]
    \le \beta .
\end{equation}
This is exactly where the proof uses pooled confidence updates: the measure on the left is the cumulative occupancy across all previous episodes, not just those routed to $k_t$. Lemma~\ref{lemma:weight-regularity} gives
\[ \bar d_h(s,a)\le \frac{C_\cov}{c_0(t-1)}\bar d_h^{(t)}(s,a), \text{ for } t\ge t_0. \]
Multiplying this pointwise inequality by $(\delta_h^{(t)}(s,a))^2$ and summing over $(s,a)$, then applying \eqref{eq:in-sample-empirical}, yields
\[ \mathbb{E}_{\bar d_h}\!\left[(\delta_h^{(t)})^2\right] \le \frac{C_\cov}{c_0(t-1)} \mathbb{E}_{\bar d_h^{(t)}}\!\left[(\delta_h^{(t)})^2\right] \le \frac{C_\cov\beta}{c_0(t-1)}. \]
\end{proof}

\begin{proof}[Proof of Theorem~\ref{theorem:regret}]
	Combining equation \eqref{eq:cs-witness} with Lemmas~\ref{lemma:obj-conflict} and~\ref{lemma:in-sample-witness} gives, for every $t\ge t_0$ and $h\in[H]$,
	\begin{equation}\label{eq:per-stage-regret}
		\mathbb{E}_{d_h^{(t)}}[\delta_h^{(t)}] \le \sqrt{1+\Gamma}\cdot \sqrt{\frac{C_\cov \beta}{c_0 (t-1)}} = \sqrt{\frac{C_\cov\,(1+\Gamma)\beta}{c_0 (t-1)}}.
	\end{equation}
	We split the regret \eqref{eq:opt-decomp} at the burn-in time $t_0$:
	\begin{equation*}
		\mathsf{Reg}(T) \;\le\; \underbrace{\sum_{t=1}^{t_0-1}\sum_{h=1}^H\mathbb{E}_{d_h^{(t)}}[\delta_h^{(t)}]}_{\text{(I)}} \;+\; \underbrace{\sum_{t=t_0}^T\sum_{h=1}^H\mathbb{E}_{d_h^{(t)}}[\delta_h^{(t)}]}_{\text{(II)}}.
	\end{equation*}
	
	During burn-in, for each $t<t_0$, $\sum_h \mathbb{E}_{d_h^{(t)}}[\delta_h^{(t)}] = V_1^{f^{(t)}}(s_0;w_t) - \mathcal{J}(\pi^{(t)};w_t) \le H$ since both $V_1^{f^{(t)}}\in[0,H]$ and $\mathcal{J}(\pi^{(t)};w_t)\in[0,H]$. Hence $(\text{I})\le H\,t_0$.
	
	During the stable phase, substituting \eqref{eq:per-stage-regret} and summing over $h$ and $t$, \text{II} is at most
	\begin{align*}
	H\sum_{t=t_0}^T \sqrt{\frac{C_\cov\,(1+\Gamma)\,\beta}{c_0\,(t-1)}}= H\sqrt{\frac{C_\cov\,(1+\Gamma)\,\beta}{c_0}}
	\sum_{t=t_0}^T\frac{1}{\sqrt{t-1}}
	\le 2H\sqrt{\frac{C_\cov\,(1+\Gamma)\,\beta\,T}{c_0}},
	\end{align*}
	using $\sum_{t=t_0}^T (t-1)^{-1/2}\le \sum_{u=1}^{T-1}u^{-1/2}\le 2\sqrt{T}$.
	
	Adding (I) and (II) and absorbing the universal constant $1/c_0$ into $\mathcal{O}(\cdot)$,
	\[
	\mathsf{Reg}(T) \le H t_0 + 2H\sqrt{\frac{C_\cov(1+\Gamma)\beta T}{c_0}} = \mathcal{O}\!\left(H\sqrt{C_\cov (1+\Gamma) \beta T}\right) + \mathcal{O}(H t_0).
	\]
	The preference process satisfies the schedule regularity of Assumption~\ref{assm:mixture_stability} after $t_0=\mathcal{O}(C_\cov K \beta)$ episodes (a standard Hoeffding's bound on the visitation counts of the $K$ basis components, with the $\log$ factor absorbed into $\beta$). Substituting yields
	\[ \mathsf{Reg}(T) \le \mathcal{O}\!\left(H\sqrt{C_\cov\,(1+\Gamma)\beta T} + H C_\cov K \beta\right), \]
	which is equation \eqref{eq:regret_explicit}. The corresponding worst-case scaling follows by applying $1+\Gamma\le K$ from Lemma~\ref{lem:gamma_K_ub}.
\end{proof}

For completeness, we record the structural lemma used above; it shows that the conflict coefficient is at most $K-1$ for any basis, with equality (up to lower-order terms) in the disjoint regime that powers the lower bound.

\begin{lemma}\label{lem:gamma_K_ub}
For any collection of $K$ basis occupancies $\{d_h^{\pi^\star_{\basis k}}\}_{k\in[K]}$ and every $h\in[H]$,
\[ \Gamma = \max_{h\in[H]}\max_{k\in[K]} \chi^2\!\left(d_h^{\pi^\star_{\basis k}} \big\| \bar d_h\right) \le K-1. \]
Equality holds (up to lower-order terms) when the basis occupancies have pairwise disjoint supports.
\end{lemma}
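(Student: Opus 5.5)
The plan is to fix $h$ and $k$, write $d_k := d_h^{\pi^\star_{\basis k}}$ and $\bar d := \bar d_h = K^{-1}\sum_{j} d_j$, and use the identity already used in Lemma~\ref{lemma:obj-conflict}:
\[ 1+\chi^2(d_k\|\bar d) = \sum_{s,a} \frac{d_k(s,a)^2}{\bar d(s,a)}. \]
This identity holds because $d_k\ll \bar d$: every point in the support of $d_k$ has $\bar d \ge d_k/K>0$. The key pointwise observation is that $\bar d(s,a) \ge d_k(s,a)/K$, since all the other summands in $\bar d$ are nonnegative. It gives
\[ \frac{d_k(s,a)^2}{\bar d(s,a)} \le K\, d_k(s,a) \]
on the support of $d_k$. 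Summing over $(s,a)$ and using $\sum_{s,a} d_k(s,a)=1$ yields $1+\chi^2(d_k\|\bar d)\le K$. Hence $\chi^2(d_k\|\bar d)\le K-1$ for every $k$ and $h$, and taking the maxima in Definition~\ref{def:obj_conflict_coef} gives $\Gamma\le K-1$.

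For the equality claim, suppose the supports of $d_1,\dots,d_K$ are pairwise disjoint. Then on the support of $d_k$ we have $\bar d = d_k/K$ exactly, so the pointwise inequality above becomes an equality everywhere on that support. The sum therefore equals exactly $K$, and $\chi^2 = K-1$ for every $k$, so $\Gamma = K-1$.

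The qualifier "up to lower-order terms" I would handle by a continuity remark. If the overlap mass $\epsilon := \max_k \sum_{(s,a)\in \mathrm{supp}(d_k)} \sum_{j\ne k} d_j(s,a)$ is small, then on $\mathrm{supp}(d_k)$ one has $\bar d = (d_k + \text{overlap})/K$. Using $1/(1+x)\ge 1-x$ gives $\chi^2(d_k\|\bar d)\ge K-1-O(K\epsilon)$ (roughly), so near-disjointness yields near-equality.

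There is no real obstacle. The only care needed is the support convention in the $\chi^2$ definition, which only sums over points with $\bar d>0$; the absolute-continuity remark above covers it. I would also note that the bound is attained only when each $d_k$ vanishes wherever the others are positive, which matches the disjoint-regime construction used for the lower bound.
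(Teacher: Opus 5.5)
Your proof is correct and is essentially the paper's argument: the pointwise bound $\bar d_h \ge d_k/K$ on the support of $d_k$ gives $\sum_{s,a} d_k^2/\bar d_h \le K$, hence $\chi^2 \le K-1$, with equality under pairwise disjoint supports since then $\bar d_h = d_k/K$ exactly on that support. Your explicit absolute-continuity check and the near-disjoint bound $\chi^2 \ge K-1-K\epsilon$ are correct additions that make precise what the paper leaves implicit.
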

\begin{proof}
    Fix $h$ and $k$. Pointwise, $\bar d_h(s,a) \ge \frac{1}{K}\, d_h^{\pi^\star_{\basis k}}(s,a)$ on the support of $d_h^{\pi^\star_{\basis k}}$. Hence
    \[ \chi^2\!\left(d_h^{\pi^\star_{\basis k}}\,\big\|\,\bar d_h\right) = \sum_{s,a}\frac{(d_h^{\pi^\star_{\basis k}}(s,a))^2}{\bar d_h(s,a)}-1 \le \sum_{s,a}\frac{(d_h^{\pi^\star_{\basis k}}(s,a))^2}{(1/K)\,d_h^{\pi^\star_{\basis k}}(s,a)}-1 = K-1. \]
    The disjoint-support case attains equality because then $\bar d_h(s,a) = \frac{1}{K}\, d_h^{\pi^\star_{\basis k}}(s,a)$ on the support.
\end{proof}

\subsection{Proof of the regret lower bound}\label{app:lower_bound}

\begin{theorem}\label{theorem:lower_bound}
    There exists a family of MOMDPs with $K$ basis policies, $C_\cov\le K$, and $\Gamma=K-1$, such that for any (possibly randomized) learning algorithm $A$ and any $T\ge K$, a preference sequence that cycles uniformly through $\{e_1,\ldots,e_K\}$ satisfies
    \[ \mathbb{E}\bigl[\mathsf{Reg}(T)\bigr] \ge \Omega\!\left( H\sqrt{K T} \right) = \Omega\!\left( H\sqrt{(1+\Gamma) T} \right). \]
\end{theorem}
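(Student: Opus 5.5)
The plan is to reduce the problem to $K$ independent two-armed (or multi-armed) bandit-like subproblems, one per preference $e_k$, each embedded in a horizon-$H$ chain. Then we invoke the standard $\Omega(\sqrt{n})$ minimax lower bound for distinguishing a small reward gap, with $n\approx T/K$ rounds per subproblem.

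\textbf{Construction.} Take $m=K$ objectives. At $s_0$ the agent chooses one of $K$ disjoint "lanes" $\mathcal{S}_1,\ldots,\mathcal{S}_K$, and all later transitions stay inside the chosen lane. Only lane $k$ yields reward in objective $k$. Inside lane $k$ the agent repeatedly chooses between two actions $a^+,a^-$, and it earns per-step Bernoulli reward with mean $\tfrac12+\epsilon\sigma_k$ for $a^+$ and $\tfrac12-\epsilon\sigma_k$ for $a^-$. Here $\sigma\in\{\pm1\}^K$ is an unknown sign vector, and the agent commits to a single action for the whole lane so that one bit is learned per episode. The basis policies $\pi^\star_{\basis k}$ ("enter lane $k$, play the $\sigma_k$-good action") have pairwise disjoint occupancies for $h\ge1$. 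Lemma~\ref{lem:gamma_K_ub} then gives $\Gamma=K-1$, up to the shared initial state, which is handled by defining occupancies from $h\ge1$ or by a lower-order correction. The uniform mixture $\bar d_h$ covers every basis policy with ratio $K$, so $C_\cov\le K$. Assumption~\ref{assm:ref_completeness} holds trivially, since $\pi^\star_{e_k}=\pi^\star_{\basis k}$.

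\textbf{Regret decomposition.} Under the cyclic preference sequence, $w_t=e_k$ on a deterministic set of $n_k\ge\lfloor T/K\rfloor$ episodes. In such an episode the agent is suboptimal by at least $2\epsilon H$ if it enters lane $k$ with the wrong action, and by at least about $H/2$ if it enters another lane. Lane-$k$ rewards are informative only about $\sigma_k$. Episodes spent in other lanes reveal nothing about $\sigma_k$, because lanes are independent and the rewards of objective $k$ vanish elsewhere. Hence the regret on the $e_k$-episodes is lower bounded by the regret of a two-armed bandit with gap $2\epsilon H$ over $n_k$ rounds, where each round gives $H$ Bernoulli observations.

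\textbf{Per-lane lower bound.} Average over a uniformly random $\sigma$ and apply Le Cam / Pinsker lane by lane. The KL divergence between the two hypotheses $\sigma_k=\pm1$ over $n_k$ episodes is $O(n_k H\epsilon^2)$. Choose $\epsilon\asymp 1/\sqrt{n_kH}$, so that the wrong action is played in a constant fraction of episodes. This gives regret $\Omega(n_k\cdot\epsilon H)=\Omega(\sqrt{H n_k})$ per lane.

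\textbf{Main obstacle.} Summing the per-lane bound yields $\Omega(\sqrt{HKT})$, which falls short of the claimed $H\sqrt{KT}$. I expect matching the $H$ factor to be the main obstacle. The fix I would try first is to make the lane reward fully delayed: only a single terminal Bernoulli of mean $\tfrac12\pm\epsilon$, scaled by $H$ (equivalently, a reward of $1$ on each step after an irrevocable choice, determined by a single coin flip per episode). Then each episode gives one bit of information while the gap is $\epsilon H$, and $\epsilon\asymp1/\sqrt{n_k}$ yields $\Omega(H\sqrt{n_k})$ per lane. Summing over $k$ with $n_k\approx T/K$ then gives $\Omega(H\sqrt{KT})=\Omega(H\sqrt{(1+\Gamma)T})$. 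The remaining care point is checking that the per-step rewards still lie in $[0,1]$ (make the shared coin determine whether all subsequent steps pay $1$) and that $T\ge K$ ensures $n_k\ge1$.
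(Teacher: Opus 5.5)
Your proposal is correct once you adopt the fix in your last paragraph, and that fix is exactly the paper's construction. The paper uses $K$ disjoint sub-chains, each with one binary decision whose single Bernoulli outcome is paid on every remaining step, so each episode yields one informative sample while the gap is scaled by $H-1$. It then sets $\Delta\asymp\sqrt{K/T}$, applies a per-preference two-armed bandit lower bound (the paper invokes Lemma~\ref{lem:bandit_lb} where you run Le Cam directly) with mis-routed episodes treated as uninformative mistakes costing at least the gap, and sums over the $K$ preferences to get $\Omega(H\sqrt{KT})$.

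One point that both you and the paper leave implicit: sub-chain $k$ can also be entered during $e_j$-episodes with $j\neq k$, so the number of $\sigma_k$-informative samples is not capped by $n_k$. This is harmless because each such sample itself costs at least the gap under $e_j$, but it should be accounted for in your lane-by-lane KL argument.
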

Together, Theorems~\ref{theorem:regret} and~\ref{theorem:lower_bound} pin down the dependence on $K$ and $\Gamma$ up to factors of $\sqrt{C_\cov\,\log|\cF|}$.

In the following proof, we prove the lower bound by embedding a standard Bernoulli two-armed bandit problem into a simple subclass of MOMDPs. This is sufficient because a lower bound for one valid MOMDP family is automatically a lower bound for the larger MOMDP class. The constructed family has $K$ pairwise-disjoint sub-chains, one per objective, each containing one Bernoulli two-armed bandit. Here, a sub-chain means a branch of the environment state space, not a policy: the $k$-th optimal basis policy is the policy that routes into sub-chain $\cS_k$ and then plays the better binary action.

Under a cyclic preference sequence, regret has two sources: 
1) a {routing} term for episodes whose first action does not match the active preference;
2) $K$ independent {bandit} terms for wrong binary choices inside correctly routed sub-chains.

We then show that for each preference $e_k$, the learner faces a two-armed bandit on the episodes with active preference $e_k$, with mis-routing acting as a no-information mistake that also costs order $\Delta$. The horizon does not create a harder statistical problem in this construction: after the binary decision, the remaining states reveal no additional choices or independent samples. Instead, the same Bernoulli outcome is paid over the remaining $H-1$ stages, so the bandit regret is simply multiplied by $H-1$. Tuning the bandit gap to $\Delta = \tfrac{1}{8}\sqrt{K/T}$ makes the total unavoidable cost scale as $\Delta T=\Theta(\sqrt{KT})$, and the repeated payoff yields the claimed $\Omega(H\sqrt{KT})$ lower bound.

\paragraph{Hard MOMDP instances.}
Fix $K\ge 2$, $H\ge 2$, and $\Delta\in(0,1/4]$. The MOMDP has $m=K$ objectives and a single start state $s_0$. From $s_0$, the $K$ deterministic actions $a\in[K]$ route the agent to disjoint state-space branches $\cS_1,\ldots,\cS_K$, which we call sub-chains. Inside sub-chain $\cS_k$, the agent makes one binary decision $u\in\{0,1\}$ at the second step; thereafter, the chain is uninformative. There is an unknown bit $b_k\in\{0,1\}$, and the $k$-th reward coordinate at every step inside $\cS_k$ equals a single Bernoulli outcome drawn at the moment of the binary decision, with mean
\[ \tfrac{1}{2}+\Delta\quad\text{if }u=b_k, \tfrac{1}{2}\quad\text{if }u\neq b_k. \]
All other reward coordinates are zero. Thus, each episode yields only one statistically informative bandit observation, even though it has a horizon of $H$. A correctly routed episode where the wrong binary action is chosen contributes regret $(H-1)\Delta$, and an incorrectly routed episode contributes regret at least $(H-1)\Delta$ as well.

\paragraph{Preference sequences.}
The preference sequence cycles through $e_1,e_2,\ldots,e_K,e_1,e_2,\ldots$, so each preference $e_k$ appears between $\lfloor T/K\rfloor$ and $\lceil T/K\rceil$ times. The optimal basis policy $\pi^\star_{\basis k}$ routes to sub-chain $\cS_k$ and plays $b_k$; the $K$ basis occupancies are supported on disjoint sub-chains, so by Lemma~\ref{lem:gamma_K_ub}, $\Gamma=K-1$, and the basis-mixture witness $\bar d_h$ gives $C_\cov\le K$.
We use a lower bound result from the work of \citet{lattimore2020bandit}.

\begin{lemma}[Bandit lower bound \cite{lattimore2020bandit}]\label{lem:bandit_lb}
	For any randomized algorithm interacting with a Bernoulli two-armed bandit of gap $\Delta\in(0,1/4]$, the expected cumulative regret after $n$ pulls is at least $c_{\mathrm{LB}}\min(\Delta n,\,1/\Delta)$ for a universal constant $c_{\mathrm{LB}}>0$, in the worst case over the unknown optimal arm.
\end{lemma}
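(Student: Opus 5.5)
The plan is the standard two-instance change-of-measure argument (Bretagnolle--Huber inequality plus the divergence decomposition), as in \citet{lattimore2020bandit}. Fix an algorithm and $n$, and write $N_j(n)$ for the number of pulls of arm $j\in\{0,1\}$. The regret of an instance equals $\Delta$ times the expected number of pulls of its suboptimal arm. I will use the following two instances.
\begin{itemize}
\item In instance $\nu$, arm $0$ has mean $\tfrac12+\Delta$ and arm $1$ has mean $\tfrac12$, so arm $0$ is optimal.
\item In instance $\nu'$, arm $1$ is raised to $\tfrac12+2\Delta$, so arm $1$ is optimal. The gap is again $\Delta$, which is all the statement requires.
\end{itemize}
I deliberately do not use the symmetric swap of the two arms. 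Under $\nu$ the divergence would then charge $\mathbb{E}_\nu[N_0(n)]$, which is of order $n$. In the asymmetric construction only arm $1$ differs, so the divergence decomposition gives
\[
\mathrm{KL}(\mathbb{P}_\nu,\mathbb{P}_{\nu'})=\mathbb{E}_\nu[N_1(n)]\,\mathrm{kl}\big(\tfrac12,\tfrac12+2\Delta\big),
\]
where $\mathbb{P}_\nu,\mathbb{P}_{\nu'}$ are the laws of the interaction, including the algorithm's internal randomness.

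The first key step is the case split. If $\mathbb{E}_\nu[N_1(n)]\ge 1/\Delta^2$, then $R_\nu=\Delta\,\mathbb{E}_\nu[N_1(n)]\ge 1/\Delta$ and we are done. Otherwise the divergence is at most $\Delta^{-2}\,\mathrm{kl}(\tfrac12,\tfrac12+2\Delta)$. I will bound this using $\mathrm{kl}(p,q)\le (p-q)^2/(q(1-q))$, which gives $\mathrm{kl}\le 16\Delta^2\cdot C$ as long as $q=\tfrac12+2\Delta$ stays bounded away from $1$.

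Next, take the event $A=\{N_1(n)\ge n/2\}$. On $A$ instance $\nu$ suffers regret at least $\Delta n/2$, and on $A^c$ instance $\nu'$ suffers regret at least $\Delta n/2$. Bretagnolle--Huber then gives
\[
R_\nu+R_{\nu'}\ \ge\ \tfrac{\Delta n}{2}\big(\mathbb{P}_\nu(A)+\mathbb{P}_{\nu'}(A^c)\big)\ \ge\ \tfrac{\Delta n}{4}e^{-\mathrm{KL}}\ \ge\ c\,\Delta n .
\]
Hence $\max(R_\nu,R_{\nu'})\ge c\,\min(\Delta n,1/\Delta)$, which is exactly the worst case over the unknown optimal arm.

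The main obstacle is the boundary of the allowed range $\Delta\in(0,1/4]$. At $\Delta=1/4$ we get $\tfrac12+2\Delta=1$, so the KL divergence blows up. I would handle this by restricting the argument above to $\Delta\le 1/8$, where $q\le 3/4$ and the $\chi^2$-type bound on $\mathrm{kl}$ gives a universal constant.

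For $\Delta\in(1/8,1/4]$ the target $\min(\Delta n,1/\Delta)$ is at most $8$, so a constant lower bound suffices. There I would use the symmetric swap restricted to the very first pull. The algorithm's first action has the same distribution under both arm labelings, so on one of the two instances it pulls the suboptimal arm with probability at least $1/2$. That instance then has regret at least $\Delta/2\ge 1/16$, which is at least $\tfrac{1}{128}\min(\Delta n,1/\Delta)$. Taking the smaller of the two constants gives a universal $c_{\mathrm{LB}}$.
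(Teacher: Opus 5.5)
The paper gives no proof of this lemma; it only cites Lattimore--Szepesv\'ari. Every step you wrote checks out:
\begin{itemize}
\item the divergence decomposition charges only $\mathbb{E}_\nu[N_1(n)]$;
\item the $\chi^2$ bound gives $\mathrm{kl}(\tfrac12,\tfrac12+2\Delta)\le\tfrac{64}{3}\Delta^2$ for $\Delta\le\tfrac18$;
\item the case split at $\mathbb{E}_\nu[N_1(n)]=\Delta^{-2}$ upgrades the usual $\Delta n\,e^{-cn\Delta^2}$ to $\min(\Delta n,1/\Delta)$;
\item the first-pull argument covers $\Delta\in(\tfrac18,\tfrac14]$.
\end{itemize}
The problem is \emph{which} statement this proves. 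Your $\nu'$ is not the arm swap of $\nu$, because its optimal arm has mean $\tfrac12+2\Delta$. So you have a lower bound in the worst case over all Bernoulli two-armed bandits with gap $\Delta$. You do not have it in the worst case over which arm is optimal with the means fixed at $\tfrac12+\Delta$ and $\tfrac12$. The latter is what ``the unknown optimal arm'' refers to, and it is what the paper uses. In the proof of Theorem~\ref{theorem:lower_bound}, the only unknown in sub-chain $k$ is the bit $b_k$, with means $\tfrac12+\Delta$ if $u=b_k$ and $\tfrac12$ otherwise. Your $\nu'$ therefore lies outside the hard family, and your version of the lemma cannot be plugged in there.

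Your reason for avoiding the swap is not a real obstruction; it only bites when Bretagnolle--Huber is applied at the final horizon. Per-round pseudo-regret is nonnegative, so $R_b(n)\ge R_b(n')$ with $n'=\min(n,\lfloor\Delta^{-2}\rfloor)$. Take $\nu_0=(\tfrac12+\Delta,\tfrac12)$ and $\nu_1=(\tfrac12,\tfrac12+\Delta)$. Both $\mathrm{kl}(\tfrac12+\Delta,\tfrac12)$ and $\mathrm{kl}(\tfrac12,\tfrac12+\Delta)$ are at most $\tfrac{16}{3}\Delta^2$ for every $\Delta\le\tfrac14$. Hence the divergence between the two interaction laws over $n'$ rounds is at most $\tfrac{16}{3}$, however the pulls are split. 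Your Bretagnolle--Huber step with $A=\{N_1(n')\ge n'/2\}$ then gives $R_0(n)+R_1(n)\ge\tfrac{\Delta n'}{4}e^{-16/3}$, and $\Delta n'\ge\tfrac{15}{16}\min(\Delta n,1/\Delta)$. This proves the swap version directly, with no case split on $\mathbb{E}[N_1(n)]$ and no separate treatment of $\Delta\in(\tfrac18,\tfrac14]$.
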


\begin{proof}[Proof of Theorem~\ref{theorem:lower_bound}]
	We proceed in three steps.
	First, for a fixed interaction history, partition the $T$ episodes into two disjoint sets:
    $M$ episodes where the learner \emph{mis-routes}, meaning that its first action does not equal the index $k_t$ of the active preference $e_{k_t}$. Each such episode loses at least $(H-1)\Delta$ relative to the optimal policy for $e_{k_t}$.
	For each $k\in[K]$, $n_k$ episodes where the learner correctly routes to $\cS_k$ under preference $e_k$. Conditional on these episodes, the learner observes exactly $n_k$ pulls from the two-armed Bernoulli bandit associated with unknown bit $b_k$.

	Let $M_k$ be the number of mis-routed episodes whose active preference is $e_k$, and let $m_k:=M_k+n_k$ be the total number of appearances of preference $e_k$. The cyclic sequence fixes $m_k$ deterministically, with $\lfloor T/K\rfloor\le m_k\le\lceil T/K\rceil$, and $M+\sum_{k=1}^K n_k = \sum_{k=1}^K m_k=T$ pathwise. Taking expectation over the learner's randomness and the bandit rewards, the cumulative regret satisfies
    \begin{equation}\label{eq:lb-decomp}
        \mathbb{E}[\mathsf{Reg}(T)] \ge (H-1)\Delta\,\mathbb{E}[M] + (H-1)\sum_{k=1}^K \mathbb{E}[\mathsf{Reg}_k^{\mathrm{bandit}}(n_k)],
    \end{equation}
    where $\mathsf{Reg}_k^{\mathrm{bandit}}(n_k)$ denotes the cumulative regret of the induced bandit on sub-chain $\cS_k$ (the leading $H-1$ factor accounts for the fact that each Bernoulli outcome is paid out over $H-1$ steps).

    Next, set $\Delta = \tfrac{1}{8}\sqrt{K/T}$, which satisfies $\Delta\le 1/8$ since $T\ge K$. The cyclic preference sequence ensures $m_k \le \lceil T/K\rceil \le 2T/K$, so
    \[ \Delta m_k \le \tfrac{1}{8}\sqrt{K/T}\cdot \tfrac{2T}{K}    = \tfrac{1}{4}\sqrt{T/K}
    \le \tfrac{1}{\Delta}. \]
    For a fixed $k$, restrict attention to the $m_k$ episodes whose active preference is $e_k$. In these episodes, the learner must either route to $\cS_k$ and choose one of the two bandit arms, or mis-route and incur one-stage scalarized regret at least $\Delta$ while receiving no observation from the $k$-th bandit. Thus, the sum $\Delta M_k+\mathsf{Reg}_k^{\mathrm{bandit}}(n_k)$ is lower bounded by the regret of a Bernoulli two-armed bandit problem with $m_k$ opportunities. Lemma~\ref{lem:bandit_lb} gives
    \[\mathbb{E}\!\left[\Delta M_k+\mathsf{Reg}_k^{\mathrm{bandit}}(n_k)\right] \ge c_{\mathrm{LB}}\min(\Delta m_k,1/\Delta)=c_{\mathrm{LB}}\Delta m_k.\]
    Summing over $k$ yields
    \[ \Delta \mathbb{E}[M] + \sum_{k=1}^K \mathbb{E}[\mathsf{Reg}_k^{\mathrm{bandit}}(n_k)] \ge c_{\mathrm{LB}}\Delta\sum_{k=1}^K m_k = c_{\mathrm{LB}}\Delta T. \]
    Lastly, substituting the preceding bound into equation \eqref{eq:lb-decomp},
    \begin{align*}
        \mathbb{E}[\mathsf{Reg}(T)] \ge c_{\mathrm{LB}} (H-1)\Delta T,
    \end{align*}
    Substituting $\Delta=\tfrac{1}{8}\sqrt{K/T}$, we have
    \[ \mathbb{E}[\mathsf{Reg}(T)] \ge \frac{c_{\mathrm{LB}}}{8}(H-1)\sqrt{KT} =\Omega\!\left(H\sqrt{KT}\right). \]
	Since the construction has $\Gamma=K-1$, this is equivalent to $\Omega(H\sqrt{(1+\Gamma)T})$. Thus, we have completed the proof of the lower bound.
\end{proof}

\subsection{Bounds on objective conflict coefficients}\label{subsec:algorithm-3-analysis}

Although Algorithm~\ref{alg_routing} does not require $\Gamma$ as an input, its off-policy transfer step is governed by the same overlap quantity. At iteration $t$, sampling a basis policy uniformly yields the routed-ensemble mixture and its associated conflict measure
\begin{equation}\label{eq:routed-ensemble-mixture}
    \bar d_{h,t}^{\mathrm{RE}} = \frac{1}{K}\sum_{j=1}^K d_h^{\pi_j^{(t-1)}}, ~ \Gamma_{\mathrm{RE}}^{(t)} = \max_{h\in[H]}\max_{j\in[K]} \chi^2\!\left(d_h^{\pi_j^{(t-1)}}\big\|\bar d_{h,t}^{\mathrm{RE}}\right).
\end{equation}
When the learned ensemble recovers the latent basis policies, $\Gamma_{\mathrm{RE}}^{(t)}$ coincides with $\Gamma$. The following proposition formalizes that the off-policy estimators used to route and update the ensemble have variance inflated by exactly the factor $1+\Gamma_{\mathrm{RE}}^{(t)}$.

\begin{proposition}\label{prop:offpolicy-conflict}
Fix an iteration $t$ of Algorithm~\ref{alg_routing}, a stage $h$, and a target basis index $j\in[K]$. Suppose $B$ replay samples $(s_i,a_i)$ are drawn i.i.d.\ from $\bar d_{h,t}^{\mathrm{RE}}$, and let $g_h:\mathcal{S}\times\mathcal{A}\to[-G,G]$ be any bounded measurable signal with target value
\[ \theta_{j,h} = \mathbb{E}_{d_h^{\pi_j^{(t-1)}}}[g_h(s,a)]. \]
The importance-weighted off-policy estimator
\[ \widehat\theta_{j,h} := \frac{1}{B}\sum_{i=1}^B \frac{d_h^{\pi_j^{(t-1)}}(s_i,a_i)}{\bar d_{h,t}^{\mathrm{RE}}(s_i,a_i)}g_h(s_i,a_i) \]
is unbiased and satisfies
\begin{equation}\label{eq:offpolicy-mse}
    \mathbb{E}\!\left[(\widehat\theta_{j,h}-\theta_{j,h})^2\right] \le \frac{G^2\bigl(1+\Gamma_{\mathrm{RE}}^{(t)}\bigr)}{B}.
\end{equation}
In the idealized case $\pi_j^{(t-1)}=\pi^\star_{\basis j}$ for all $j$, $\Gamma_{\mathrm{RE}}^{(t)}=\Gamma$.
\end{proposition}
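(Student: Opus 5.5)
Write $p=d_h^{\pi_j^{(t-1)}}$, $q=\bar d_{h,t}^{\mathrm{RE}}$, and let $X_i = \frac{p(s_i,a_i)}{q(s_i,a_i)}g_h(s_i,a_i)$, so that $\widehat\theta_{j,h}=\frac1B\sum_i X_i$ is an average of i.i.d.\ terms.

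\textbf{Absolute continuity.} Since $q=\frac1K\sum_{j'} d_h^{\pi_{j'}^{(t-1)}}$ contains $p$ with weight $1/K$, we have $q\ge p/K$ pointwise. Hence $q(s,a)=0$ forces $p(s,a)=0$, so the ratio $p/q$ is well defined on the support of $q$. It is also bounded by $K$, so every moment below is finite.

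\textbf{Unbiasedness.} Use the standard change of measure:
\[
\mathbb{E}_q[X_1]=\sum_{(s,a):q>0} q\cdot\frac{p}{q}\,g_h=\sum_{s,a} p\,g_h=\theta_{j,h}.
\]
The restriction to $q>0$ loses no mass of $p$, by the previous step.

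\textbf{Variance bound.} By independence, $\mathbb{E}[(\widehat\theta_{j,h}-\theta_{j,h})^2]=\mathrm{Var}(X_1)/B\le \mathbb{E}_q[X_1^2]/B$. Using $|g_h|\le G$,
\[
\mathbb{E}_q[X_1^2]=\sum_{s,a}\frac{p(s,a)^2}{q(s,a)}\,g_h(s,a)^2\le G^2\sum_{s,a}\frac{p^2}{q}=G^2\bigl(1+\chi^2(p\|q)\bigr).
\]
The final identity is the same one used for $\phi_h$ in Lemma~\ref{lemma:obj-conflict}. By the definition of $\Gamma_{\mathrm{RE}}^{(t)}$ in \eqref{eq:routed-ensemble-mixture} as a max over $h$ and $j$, we have $\chi^2(p\|q)\le\Gamma_{\mathrm{RE}}^{(t)}$. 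Combining these gives \eqref{eq:offpolicy-mse}. One could tighten this by subtracting $\theta_{j,h}^2$, but that is not needed.

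\textbf{Idealized case.} If $\pi_j^{(t-1)}=\pi^\star_{\basis j}$ for all $j$, then $q$ equals the basis mixture $\bar d_h$ of Definition~\ref{def:obj_conflict_coef}. The two maxima then coincide term by term, so $\Gamma_{\mathrm{RE}}^{(t)}=\Gamma$.

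\textbf{Main obstacle.} The only delicate point is the support issue handled in the first step. It is settled by the mixture containing $p$, so that $p\ll q$. Everything else is a direct second-moment computation.
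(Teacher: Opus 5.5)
Your proposal is correct and follows the paper's proof essentially step for step. It uses change of measure for unbiasedness, then bounds the single-sample second moment by $G^2\sum_{s,a}p^2/q=G^2(1+\chi^2(p\|q))\le G^2(1+\Gamma_{\mathrm{RE}}^{(t)})$ and divides by $B$. The only difference is that you derive $p\ll q$ directly from the mixture structure ($q\ge p/K$, which also bounds the importance weights by $K$), whereas the paper just infers absolute continuity from finiteness of $\Gamma_{\mathrm{RE}}^{(t)}$; your justification is a slightly cleaner, self-contained version of the same step.
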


\begin{proof}
Fix iteration $t$, stage $h$, and target basis index $j$. For brevity, write $d_j := d_h^{\pi_j^{(t-1)}}$, $\mu := \bar d_{h,t}^{\mathrm{RE}}$, and $\rho_j(s,a) := d_j(s,a)/\mu(s,a)$. Finite $\Gamma_{\mathrm{RE}}^{(t)}$ implies $d_j\ll\mu$, so $\rho_j$ is well defined on $\mathrm{supp}(\mu)$. Since $d_j$ and $\mu$ are probability distributions over $\cS\times\cA$ at stage $h$, $\sum_{s,a}d_j = \sum_{s,a}\mu = 1$.

Each replay sample $(s_i,a_i)$ is drawn from $\mu$, so
\[ \mathbb{E}\!\left[\rho_j(s_i,a_i) g_h(s_i,a_i)\right] = \sum_{s,a}\mu(s,a) \frac{d_j(s,a)}{\mu(s,a)} g_h(s,a) = \sum_{s,a}d_j(s,a) g_h(s,a) = \theta_{j,h}. \]
Averaging over $B$ samples preserves the mean: $\mathbb{E}[\widehat\theta_{j,h}] = \theta_{j,h}$.

For a single sample, the second moment of the importance-weighted signal is bounded by
\begin{equation}\label{eq:offpolicy-second-moment}
    \mathbb{E}_\mu\!\left[\rho_j(s,a)^2 g_h(s,a)^2\right] \le G^2\sum_{s,a}\frac{d_j(s,a)^2}{\mu(s,a)} = G^2\bigl(1+\chi^2(d_j\|\mu)\bigr) \le G^2\bigl(1+\Gamma_{\mathrm{RE}}^{(t)}\bigr),
\end{equation}
where the equality uses the standard $\chi^2$-identity $\sum_{s,a} d_j^2/\mu = 1+\chi^2(d_j\|\mu)$ (which holds because $d_j$ and $\mu$ are probability distributions), and the last inequality is the definition of $\Gamma_{\mathrm{RE}}^{(t)}$. The $B$ samples are independent, so
\[ \mathbb{E}\!\left[(\widehat\theta_{j,h} - \theta_{j,h})^2\right] = \mathrm{Var}(\widehat\theta_{j,h}) \le \frac{\mathbb{E}_\mu\!\left[\rho_j^2 g_h^2\right]}{B} \le \frac{G^2\bigl(1+\Gamma_{\mathrm{RE}}^{(t)}\bigr)}{B}, \]
which leads to equation \eqref{eq:offpolicy-mse}.
If $\pi_j^{(t-1)}=\pi^\star_{\basis j}$ for every $j\in[K]$, then $\mu=\bar d_h$ and $\Gamma_{\mathrm{RE}}^{(t)}=\Gamma$ by Definition~\ref{def:obj_conflict_coef}.
\end{proof}

\subsection{Approximate low-rank conditions}\label{app:approx-realizability}

Next, we show how the regret bound changes when Assumption~\ref{assm:ref_completeness} only holds approximately. We separate two effects.
Relative to the best policy in the chosen $K$-policy basis, the proof of Theorem~\ref{theorem:regret} still gives a sublinear regret bound, with the conflict factor inflated by $\varepsilon$.
If the chosen basis does not contain a truly optimal policy for a preference, then true regret also includes the gap between the true optimum and the best basis policy. 

Let $\bar d_h=K^{-1}\sum_{k=1}^K d_h^{\pi^\star_{\basis k}}$ be the same basis-mixture witness used in the proof of Theorem~\ref{theorem:regret}. We replace Assumption~\ref{assm:ref_completeness} by the following approximate version.

\begin{assumption}\label{assm:approx_completeness}
	There exist latent basis policies $\{\pi^\star_{\basis 1},\ldots,\pi^\star_{\basis K}\}$ and a tolerance $\varepsilon\ge0$ such that every deployed policy $\pi\in\{\pi^{(t)}:t\le T\}$ admits a single mixing vector $\nu_\pi\in\Delta^{K-1}$ satisfying, for every stage $h\in[H]$,
	\begin{equation*}%
		d_h^\pi	= q_{\pi,h}+e_{\pi,h}, q_{\pi,h}:=\sum_{k=1}^K\nu_\pi(k)d_h^{\pi^\star_{\basis k}}, \|e_{\pi,h}\|_{\bar d_h^{-1}}^2 := \sum_{s,a}\frac{e_{\pi,h}(s,a)^2}{\bar d_h(s,a)}\le \varepsilon^2 .
	\end{equation*}
    The norm is interpreted with the usual support convention: finite norm requires $e_{\pi,h}(s,a)=0$ whenever $\bar d_h(s,a)=0$.
\end{assumption}

The $\bar d_h^{-1}$-weighted norm is chosen to match the Cauchy--Schwarz step in Theorem~\ref{theorem:regret}: the proof controls $\phi_h(d)=\sum_{s,a}d(s,a)^2/\bar d_h(s,a)=\|d\|_{\bar d_h^{-1}}^2$. Thus, an error bound in this same norm can be carried through the original argument by the triangle inequality.

For this approximate statement, we keep the stable-coverage consequence of Assumption~\ref{assm:mixture_stability}: for all $t\ge t_0$, $h\in[H]$, and $(s,a)$,
\begin{equation}\label{eq:approx-stable-coverage}
	\bar d_h^{(t)}(s,a):=\sum_{i=1}^{t-1}d_h^{(i)}(s,a) \;\ge\; \frac{c_0(t-1)}{C_\cov}\,\bar d_h(s,a).
\end{equation}
This is the only property of coverage needed for the pooled-regression step. It can be assumed directly or derived from an approximate exploration condition that guarantees the accumulated actual occupancies dominate the basis mixture.

For a preference $w$, define the best value achievable by the chosen basis as
\begin{equation*}%
	\mathcal{J}_K^\star(w) := \max_{k\in[K]}\mathcal{J}(\pi^\star_{\basis k};w) = \max_{\nu\in\Delta^{K-1}}\sum_{k=1}^K\nu(k)\mathcal{J}(\pi^\star_{\basis k};w).
\end{equation*}
The equality holds because the objective is linear in the mixing weights. We measure basis-relative regret by
\begin{equation*}%
	\mathsf{Reg}_K(T) := \sum_{t=1}^T \left[\mathcal{J}_K^\star(w_t)-\mathcal{J}(\pi^{(t)};w_t)\right].
\end{equation*}

\begin{proposition}\label{prop:approx-regret}
	Suppose scalarized rewards are bounded in $[0,1]$ per step, Bellman realizability and completeness hold for each per-basis class $\cF_k$, Assumption~\ref{assm:approx_completeness} holds with tolerance $\varepsilon$, and the stable-coverage condition \eqref{eq:approx-stable-coverage} holds after a burn-in time $t_0=\mathcal{O}(C_\cov K\beta)$, where $\beta:=c_1\log(K|\cF|T/\delta)$. Then, with probability at least $1-\delta$, the pooled-confidence routed-ensemble algorithm satisfies
	\begin{equation}\label{eq:approx-regret}
		\mathsf{Reg}_K(T)\;\le\;\mathcal{O}\!\left(H\sqrt{C_\cov\beta T}\,\bigl(\sqrt{1+\Gamma}+\varepsilon\bigr)+HC_\cov K\beta\right).
	\end{equation}
	When $\varepsilon=0$ and the basis is exact, $\mathsf{Reg}_K(T)=\mathsf{Reg}(T)$ and this reduces to Theorem~\ref{theorem:regret}.
\end{proposition}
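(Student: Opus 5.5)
We follow the four-step structure of the proof of Theorem~\ref{theorem:regret}, changing only the optimism step and the conflict-factor step.

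\textbf{Optimism against the basis.} First, on $E_{\mathrm{opt}}$ we replace Lemma~\ref{lemma:routed-optimism} by a basis-relative version. Pick $k^\star(w_t)\in\arg\max_k \mathcal{J}(\pi^\star_{\basis k};w_t)$. By realizability, $Q^{\pi^\star_{\basis k^\star(w_t)}}(\cdot;w_t)\in\cF^{(t-1)}_{k^\star(w_t)}(\beta)$. Restricting the joint argmax in \eqref{eq:routing-rule} to this $k$ gives
\[
V_1^{f^{(t)}}(s_0;w_t)\ge \mathcal{J}_K^\star(w_t).
\]
This step needs no convex-hull assumption on $\pi^\star_w$, since $\mathsf{Reg}_K$ compares only to the basis. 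The telescoping identity \eqref{eq:opt-decomp} then bounds $\mathsf{Reg}_K(T)$ by $\sum_t\sum_h \mathbb{E}_{d_h^{(t)}}[\delta_h^{(t)}]$, exactly as before.

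\textbf{Conflict factor via the triangle inequality.} Next we apply the Cauchy--Schwarz split \eqref{eq:cs-witness} against $\bar d_h$. This bounds each term by
\[
\|d_h^{(t)}\|_{\bar d_h^{-1}}\cdot\sqrt{\mathbb{E}_{\bar d_h}[(\delta_h^{(t)})^2]}.
\]
By Assumption~\ref{assm:approx_completeness}, $d_h^{(t)}=q_{\pi^{(t)},h}+e_{\pi^{(t)},h}$. Since $\|\cdot\|_{\bar d_h^{-1}}$ is a genuine weighted $\ell_2$ norm (support convention included), the triangle inequality gives
\[
\|d_h^{(t)}\|_{\bar d_h^{-1}}\le \|q_{\pi^{(t)},h}\|_{\bar d_h^{-1}}+\varepsilon .
\]
The vector $q$ is a convex combination of basis occupancies with weights in the simplex. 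The Jensen argument of Lemma~\ref{lemma:obj-conflict} therefore yields $\|q\|_{\bar d_h^{-1}}^2\le 1+\Gamma$, and hence the conflict factor is at most $\sqrt{1+\Gamma}+\varepsilon$.

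\textbf{In-sample factor.} This factor is unchanged. On $E_{\mathrm{reg}}$, the pooled regression bound \eqref{eq:in-sample-empirical} holds because it concerns only the actual deployed occupancies. Combining it with the stable-coverage condition \eqref{eq:approx-stable-coverage} reproduces Lemma~\ref{lemma:in-sample-witness} verbatim:
\[
\mathbb{E}_{\bar d_h}[(\delta_h^{(t)})^2]\le C_\cov\beta/(c_0(t-1)) \quad\text{for } t\ge t_0 .
\]
Note that Lemma~\ref{lemma:weight-regularity} is not needed, since \eqref{eq:approx-stable-coverage} is assumed directly.

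\textbf{Summation and the exact case.} For $t\ge t_0$, each per-stage term is at most $(\sqrt{1+\Gamma}+\varepsilon)\sqrt{C_\cov\beta/(c_0(t-1))}$. Summing over $h$ and over $t\ge t_0$ with $\sum_t (t-1)^{-1/2}\le 2\sqrt T$ gives the main term. The burn-in phase contributes at most $Ht_0=\mathcal{O}(HC_\cov K\beta)$ as before, and a union bound over $E_{\mathrm{opt}}\cap E_{\mathrm{reg}}$ gives probability $1-\delta$. For the final claim, when $\varepsilon=0$ and the basis is exact, Assumption~\ref{assm:ref_completeness} together with Lemma~\ref{lem_routing} gives $\mathcal{J}(\pi^\star_w;w)=\mathcal{J}_K^\star(w)$. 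Thus $\mathsf{Reg}_K=\mathsf{Reg}$, and the bound coincides with \eqref{eq:regret_explicit}.

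\textbf{Main obstacle.} The delicate step is the conflict factor. We must check that using the norm rather than the squared norm is legitimate, which it is because \eqref{eq:cs-witness} involves $\sqrt{\phi_h}$. We also need $q_{\pi,h}$ to be a convex combination, i.e.\ $\nu_\pi\in\Delta^{K-1}$, so that Jensen applies even though $d_h^{(t)}$ itself is not. A secondary check is that optimism holds against $\mathcal{J}_K^\star$ without any structure on deployed policies.
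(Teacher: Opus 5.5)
Your proposal is correct and follows the paper's proof essentially step for step: optimism against the basis value $\mathcal{J}_K^\star(w_t)$, the unchanged telescoping identity and Cauchy--Schwarz split, the triangle inequality in the $\bar d_h^{-1}$-weighted norm combined with Jensen on $q_{\pi,h}$, and the in-sample bound obtained directly from the stable-coverage condition \eqref{eq:approx-stable-coverage}. Your treatment of the exact case, via Assumption~\ref{assm:ref_completeness} and linearity of the value in the occupancy, is the same as the paper's, which takes $\varepsilon=0$ in its bound $A_K(T)\le H\varepsilon T$.
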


\begin{proof}%
The proof follows the same sketch as Theorem~\ref{theorem:regret} (Appendix~\ref{app:proof}), with two simplifications and one change. 
First, the comparator is changed from the true optimal value $\mathcal{J}(\pi^\star_w;w)$ to the basis-oracle value $\mathcal{J}_K^\star(w)$, so routed optimism becomes \emph{exact} against this weaker comparator (Lemma~\ref{lemma:basis-routed-optimism} below).
Second, the conflict factor $\sqrt{\phi_h(d_h^{(t)})}$ inflates from $\sqrt{1+\Gamma}$ (Lemma~\ref{lemma:obj-conflict}) to $\sqrt{1+\Gamma}+\varepsilon$ (Lemma~\ref{lemma:approx-obj-conflict} below), because deployed occupancies are only approximately in the basis hull.
Third, the witness-comparison and pooled-regression steps are \emph{unchanged}: they use only the stable coverage condition \eqref{eq:approx-stable-coverage} and Bellman completeness, neither of which is affected by relaxing Assumption~\ref{assm:ref_completeness}. We carry out the modifications below.

First, we show the following claim on the optimism of the routed basis.

\begin{claim}\label{lemma:basis-routed-optimism}
	On the optimism event $E_{\mathrm{opt}}$ (defined in Appendix~\ref{app:proof}), for every episode $t$,
	\[V_1^{f^{(t)}}(s_0;w_t)\;\ge\;\mathcal{J}_K^\star(w_t).\]
\end{claim}
\begin{proof}
	Let $k_K^\star(w_t)\in\arg\max_{k\in[K]}\mathcal{J}(\pi^\star_{\basis k};w_t)$. By the optimism event $E_{\mathrm{opt}}$, the true $Q$-function $Q^{\pi^\star_{\basis k_K^\star(w_t)}}(\cdot;w_t)$ belongs to the confidence set $\cF_{k_K^\star(w_t)}^{(t-1)}(\beta)$. The routed optimistic choice in equation \eqref{eq:routing-rule} maximizes over all basis indices and all functions in their confidence sets, so we may restrict the maximization to the single feasible candidate $(k_K^\star(w_t),Q^{\pi^\star_{\basis k_K^\star(w_t)}})$. This gives
	\[V_1^{f^{(t)}}(s_0;w_t)\;\ge\;V_1^{Q^{\pi^\star_{\basis k_K^\star(w_t)}}}(s_0;w_t)\;=\;\mathcal{J}(\pi^\star_{\basis k_K^\star(w_t)};w_t)\;=\;\mathcal{J}_K^\star(w_t). \qedhere\]
\end{proof}

Second, we demonstrate regret reduction via a certain identity.
This step is identical to the regret reduction in equation \eqref{eq:opt-decomp}. Working on the intersection $E_{\mathrm{opt}}\cap E_{\mathrm{reg}}$ (probability at least $1-\delta$), Lemma~\ref{lemma:basis-routed-optimism} together with the standard layerwise telescoping identity gives
\begin{equation}\label{eq:approx-regret-reduction}
	\mathsf{Reg}_K(T)\;\le\;\sum_{t=1}^T\!\left[V_1^{f^{(t)}}(s_0;w_t)-\mathcal{J}(\pi^{(t)};w_t)\right]\;=\;\sum_{t=1}^T\sum_{h=1}^H \mathbb{E}_{d_h^{(t)}}\!\left[\delta_h^{(t)}\right].
\end{equation}
Below, we control the Bellman residual term on the right-hand side.

Identical to equation \eqref{eq:cs-witness}, for every $t\in[T]$ and $h\in[H]$, we have
\begin{equation}\label{eq:approx-cs}
\mathbb{E}_{d_h^{(t)}} \left[\delta_h^{(t)}\right]\le \sqrt{\phi_h(d_h^{(t)})}  \cdot \sqrt{\mathbb{E}_{\bar d_h} \left[(\delta_h^{(t)})^2\right]}.
\end{equation}

Next, we show the following approximate conflict bound, which is the only step affected by $\varepsilon$.

\begin{claim}\label{lemma:approx-obj-conflict}
	Every policy $\pi$ satisfying the decomposition in Assumption~\ref{assm:approx_completeness} also satisfies, for every $h\in[H]$,
	\begin{equation}\label{eq:approx-phi-bound}
		\sqrt{\phi_h(d_h^\pi)}\;\le\;\sqrt{1+\Gamma}+\varepsilon.
	\end{equation}
\end{claim}

\begin{proof}
	Since $\phi_h(d)=\|d\|_{\bar d_h^{-1}}^2$, Assumption~\ref{assm:approx_completeness} and the triangle inequality in $\ell_2$ norm give
	\[\sqrt{\phi_h(d_h^\pi)}\;=\;\|q_{\pi,h}+e_{\pi,h}\|_{\bar d_h^{-1}}\;\le\;\|q_{\pi,h}\|_{\bar d_h^{-1}}+\|e_{\pi,h}\|_{\bar d_h^{-1}}\;\le\;\sqrt{\phi_h(q_{\pi,h})}+\varepsilon .\]
	The first summand is controlled exactly as in Lemma~\ref{lemma:obj-conflict}: by convexity of $\phi_h$ and Definition~\ref{def:obj_conflict_coef},
	\[
	\phi_h(q_{\pi,h})
	\;=\;
	\phi_h\!\left(\textstyle\sum_k\nu_\pi(k)\,d_h^{\pi^\star_{\basis k}}\right)
	\;\le\;
	\sum_k\nu_\pi(k)\,\phi_h(d_h^{\pi^\star_{\basis k}})
	\;\le\;
	1+\Gamma .
	\]
	Combining this display with the triangle-inequality bound proves equation \eqref{eq:approx-phi-bound}.
    Applying Lemma~\ref{lemma:approx-obj-conflict} to the deployed policy $\pi^{(t)}$ yields
    \begin{equation}\label{eq:approx-conflict-deployed}
	\sqrt{\phi_h(d_h^{(t)})}\;\le\;\sqrt{1+\Gamma}+\varepsilon .
\end{equation}
\end{proof}

The cumulative-occupancy lower bound in equation \eqref{eq:approx-stable-coverage}, combined with the pooled-regression event $E_{\mathrm{reg}}$, gives the same in-sample-to-witness bound as Lemma~\ref{lemma:in-sample-witness}: for every $t\ge t_0$ and $h\in[H]$,
\begin{equation}\label{eq:approx-in-sample}
	\sqrt{\mathbb{E}_{\bar d_h}\!\left[(\delta_h^{(t)})^2\right]}
	\;\le\;
	\sqrt{\frac{C_\cov\,\beta}{c_0(t-1)}} .
\end{equation}
This step depends only on equation \eqref{eq:approx-stable-coverage}, the pooled-regression event $E_{\mathrm{reg}}$, and Bellman completeness. It does not use exact basis decomposition.

Finally, combining equations \eqref{eq:approx-cs}, \eqref{eq:approx-conflict-deployed}, and \eqref{eq:approx-in-sample}, for every $t\ge t_0$ and $h\in[H]$,
\begin{equation}\label{eq:approx-per-stage}
	\mathbb{E}_{d_h^{(t)}}\!\left[\delta_h^{(t)}\right] \;\le\; \bigl(\sqrt{1+\Gamma}+\varepsilon\bigr)\sqrt{\frac{C_\cov\,\beta}{c_0(t-1)}}.
\end{equation}
We split the regret in equation \eqref{eq:approx-regret-reduction} at the burn-in time $t_0$ and bound each part.

For each $t<t_0$, $\sum_h\mathbb{E}_{d_h^{(t)}}[\delta_h^{(t)}] = V_1^{f^{(t)}}(s_0;w_t)-\mathcal{J}(\pi^{(t)};w_t)\le H$, since both quantities lie in $[0,H]$. Hence, the burn-in contribution is at most $H\,t_0$.

Substituting equation \eqref{eq:approx-per-stage} and using
$\sum_{t=t_0}^T(t-1)^{-1/2}\le\sum_{u=1}^{T-1}u^{-1/2}\le 2\sqrt{T}$,
\begin{align*}
	\sum_{t=t_0}^T\sum_{h=1}^H \mathbb{E}_{d_h^{(t)}}\!\left[\delta_h^{(t)}\right] &\le H\bigl(\sqrt{1+\Gamma}+\varepsilon\bigr) \sqrt{\frac{C_\cov\,\beta}{c_0}} \sum_{t=t_0}^T\frac{1}{\sqrt{t-1}}
    &\le 2H\bigl(\sqrt{1+\Gamma}+\varepsilon\bigr) \sqrt{\frac{C_\cov\,\beta\,T}{c_0}} .
\end{align*}

Combined together, substituting $t_0=\mathcal{O}(C_\cov K\beta)$, and treating $t_0$ as a fixed constant yields
\[ \mathsf{Reg}_K(T) \le \mathcal{O}\!\left(H\sqrt{C_\cov \beta T}\cdot \bigl(\sqrt{1+\Gamma}+\varepsilon\bigr) + H C_\cov K \beta\right),\]
which leads to equation \eqref{eq:approx-regret}.

Lastly, we connect basis-relative regret $\mathsf{Reg}_K(T)$ to the cumulative regret $\mathsf{Reg}(T)$ against the true optimal policy via a clean two-term decomposition. 
For a fixed basis $\{\pi^\star_{\basis 1},\ldots,\pi^\star_{\basis K}\}$, define the cumulative basis-approximation error along the realized preference sequence as
\begin{equation*}%
	A_K(T) = \sum_{t=1}^T \!\left[\mathcal{J}(\pi^\star_{w_t};w_t) - \mathcal{J}_K^\star(w_t)\right].
\end{equation*}
Since every basis policy is feasible, $\mathcal{J}_K^\star(w_t)\le \mathcal{J}(\pi^\star_{w_t};w_t)$ for every $t$, so $A_K(T)\ge 0$. The quantity $A_K(T)$ depends on the basis $\{\pi^\star_{\basis k}\}$ and the preference distribution $\cP$, but \emph{not} on the algorithm.

First, we present the exact decomposition of regret relative to the true optimal policy.
By telescoping the comparator $\mathcal{J}_K^\star(w_t)$ in and out of the regret sum,
\begin{equation}\label{eq:true-regret-basis-decomp}
	\mathsf{Reg}(T) = \underbrace{\sum_{t=1}^T\!\left[\mathcal{J}(\pi^\star_{w_t}; w_t)-\mathcal{J}_K^\star(w_t)\right]}_{A_K(T)\ } +  \underbrace{\sum_{t=1}^T\!\left[\mathcal{J}_K^\star(w_t)-\mathcal{J}(\pi^{(t)};w_t)\right]}_{\mathsf{Reg}_K(T)}.
\end{equation}
Proposition~\ref{prop:approx-regret} controls the second term; the first depends only on the chosen basis.
Second, we bound $A_K(T)$ when optimal policies are also approximately in the hull.
Following Assumption~\ref{assm:approx_completeness}, we have $d_h^{\pi^\star_w}=q_{w,h}+e_{w,h}$. Since scalarized rewards $r_h^w(s,a)$ are bounded in $[0,1]$, its inner product with approximation error $e_{w,h}$ has
\[ \langle e_{w,h},r_h^w\rangle \le \bigl|\langle e_{w,h},r_h^w\rangle\bigr| \le \|e_{w,h}\|_{\bar d_h^{-1}} \sqrt{\mathbb{E}_{\bar d_h}\!\left[(r_h^w)^2\right]} \le \varepsilon .
\]
Using the same mixing vector $\nu_w$ at every stage,
\begin{align*}
	\mathcal{J}(\pi^\star_w; w) &= \sum_{h=1}^H\langle d_h^{\pi^\star_w},r_h^w\rangle \le \sum_{h=1}^H\langle q_{w,h},r_h^w\rangle + H\varepsilon\\
	&= \sum_{k=1}^K\nu_w(k)\mathcal{J}(\pi^\star_{\basis k}; w) + H\varepsilon\le
	\mathcal{J}_K^\star(w)+H\varepsilon.
\end{align*}
Summing over $t\in[T]$ yields
$A_K(T) \le H\varepsilon T.$
Lastly, combining equation \eqref{eq:true-regret-basis-decomp} with the above, and Proposition~\ref{prop:approx-regret}, we have 
\begin{equation*}%
	\mathsf{Reg}(T) \le\mathcal{O}\!\left(H\varepsilon T+H\sqrt{C_\cov\beta T}\cdot\bigl(\sqrt{1+\Gamma}+\varepsilon\bigr) + HC_\cov K\beta\right).
\end{equation*}
\end{proof}

\section{Omitted Experiments and Implementation Details}\label{app:experiments}

In this section, we describe several omitted experimental results as well as the implementation details that are left out from Section \ref{sec:experiment}.

We conduct the MO-Gymnasium experiments on an AMD Ryzen 7 9700X CPU, with a total training time of approximately 72 hours. Meta-World and RLHF experiments are performed on a server with an Intel Xeon E5-2623 v3 CPU and NVIDIA RTX A6000 GPUs. The Meta-World training takes 12 hours, and the RLHF experiment takes 1 hour.

Next, we define the performance metrics used in our experiments.
We compute $\Gamma$ based on the $\chi^2$-divergence between the state occupancy distributions of single-objective basis policies. To account for the temporal diffusion effect described above, we use the time-averaged state occupancy 
\begin{align*} 
    \bar{d}^\pi(s) = \frac{1}{H}\sum_{t=0}^H P(s_t=s), \text{ then }
    \Gamma = \max_{k} \sum_{s \in \mathcal{S}} \frac{(\bar{d}^{\pi_k}(s) - \bar{d}^{\avg}(s))^2}{\bar{d}^{\avg}(s)},
\end{align*}
where $\pi_k$ is the policy trained on objective $k$ alone, and $\bar{d}^{\avg}$ is the uniform mixture of all basis occupancies.

We train a kernel ridge regression model $\hat{f}$ on the dataset of $\{(\mathbf{m}_i, y_i)\}$. The attribution of a conflicting objective $j$ (where $j \in \{5, \dots, 9\}$) is calculated as the predicted marginal gain $\hat{f}(\mathbf{m}_{\target} \cup \{j\}) - \hat{f}(\mathbf{m}_{\target}),$
where $\mathbf{m}_{\target}$ is the mask containing only the target objective.

Lastly, we provide the detailed hyperparameter settings used in our experiments to ensure reproducibility. Table~\ref{tab:routed_ensemble_params} lists the parameters for the routed ensemble algorithm across different benchmarks.
For routed ensemble, we adapt the base policy-learning algorithm for each environment. We use Soft Modularization~\citep{yang2020multi} for Meta-World, Q-Pensieve~\citep{hung2022q} for continuous MO-Gymnasium tasks, and PQL~\cite{van2014multi} for discrete MO-Gymnasium tasks.
For all baselines, we use the hyperparameters from their official implementations/repositories to ensure a fair comparison.
Each baseline is trained for $5 \times 10^5$ time steps on continuous benchmarks and $10^4$ time steps on discrete benchmarks.

\begin{table}[t]
    \centering
    \caption{Hyper-parameters for the routed ensemble procedure used in Algorithm~\ref{alg_routing}.}
    \label{tab:routed_ensemble_params}\vspace{0.075in}
    {\small\begin{tabular}{lccc}
        \toprule
        \textbf{Parameter} & \textbf{Meta-World} & \textbf{MO-Gymnasium} \\
        \midrule
        Number of basis policies & $3$ & $3$ \\
        Base optimizer & Adam & Adam \\
        SAC buffer size & $10^6$ & $10^6$ \\
        Learning rate (Actor) & $3\times10^{-4}$ & $3\times10^{-4}$ \\
        Learning rate (Critic) & $3\times10^{-4}$ & $3\times10^{-4}$ \\
        Discount factor & $0.99$ & $0.99$ \\
        Batch size & $1280$ & $256$ \\
        Network architecture & MLP $256\times256$ & MLP $256\times256$ \\
        \bottomrule
    \end{tabular}}
\end{table}

We implement kernel-based objective attribution in Algorithm~\ref{alg_krr} using a RBF kernel defined as $\psi(S_i, S_j) = \exp(-\gamma \|S_i - S_j\|^2)$. We determine the kernel bandwidth $\gamma = 0.1$ and the regularization coefficient $\lambda = 0.1$ via cross-validation on the validation set.

\subsection{Synthetic environments}\label{app_syn_env}

\textbf{Multi-objective environments.} We utilize a 2D parametric optimization environment. The policy is parameterized by a coordinate vector $\theta \in \mathbb{R}^2$. The agent aims to maximize a scalarized return $J(\theta, w) = w \cdot r_1(\theta) + (1-w) \cdot r_2(\theta)$, where $w \in [0, 1]$ represents the preference weight. We optimize $\theta$ using the Adam optimizer across a sweep of discrete weights $w$. All environments include a small regularization penalty $-0.1\|\theta\|_2^2$. We define three distinct regimes:

\emph{Case 1 (Aligned objectives).} The reward functions are quadratic penalties centered at proximal locations $\mu_1 = (1, 1)$ and $\mu_2 = (1.2, 1.2)$. This configuration simulates tasks where improving one objective implicitly benefits the other.

\emph{Case 2 (Conflicting objectives with a linear frontier).} We construct a non-convex reward landscape to induce a disjoint set of optimal policies. The targets are separated at $\mu_1 = (-2, 0)$ and $\mu_2 = (2, 0)$. Crucially, we introduce a Gaussian barrier penalty $P(\theta) = -10 \exp(-\theta_x^2)$ centered at the origin. This barrier prevents the optimization from settling in the intermediate region. Consequently, the optimal deterministic policies collapse to the extreme corners, and the Pareto frontier is formed by the linear convex hull of these solutions.

\emph{Case 3 (Conflicting objectives with a concave frontier).} We remove the central barrier while maintaining the separated targets $\mu_1 = (-2, 0)$ and $\mu_2 = (2, 0)$. The reward functions are standard negative Euclidean distances. This convex landscape allows for smooth interpolation between objectives. It results in a strictly concave Pareto frontier, in which intermediate policies outperform the linear combination of the corner solutions.

\paragraph{Nonlinear synergy.} We introduce an environment for conducting synthetic experiments to demonstrate synergy effects. This environment creates a synergy trap in which two objectives are individually detrimental or neutral but highly beneficial when combined.

We define a 2D parametric optimization task with $M=12$ distinct objectives, indexed by $j \in \{0, 1, \dots, M-1\}$.
The objectives are arranged equidistantly on a unit circle (a clock face). The goal vector for the $j$-th objective, $v_j \in \mathbb{R}^2$, is defined by the angle $\phi_j = \frac{\pi}{2} - j \cdot \frac{2\pi}{M}$:$$v_j = [\cos(\phi_j), \sin(\phi_j)]^\top.$$
Here, objective $0$ (north) is the designated target with $v_{\target} = [0, 1]^\top$.
The environment employs a specialized reward function $R(a, S)$ for an action $a \in \mathbb{R}^2$ given a subset of active objectives $S \subseteq \{0, \dots, M-1\}$, represented by a binary mask $m \in \{0,1\}^M$.
The reward consists of a standard linear component and a nonlinear synergy component:
\[R(a, S) = {a^\top \left(\sum_{j \in S} v_j\right)} + {\mathbb{I}_{3 \in S \land 9 \in S} \cdot \kappa_{\mathrm{syn}} \cdot (a^\top v_{\target})}, \]
where $\kappa_{syn}=5.0$ is the synergy bonus coefficient. 

The agent is modeled as a stochastic policy $\pi_\theta(a) = \mathcal{N}(\mu_\theta, \Sigma)$, where $\mu_\theta = \tanh(\theta)$ and $\Sigma = \sigma^2 I$ (with $\sigma=0.2$). The parameter $\theta \in \mathbb{R}^2$ is optimized via policy gradient, which optimizes the parameters of a stochastic policy by maximizing expected reward through Monte Carlo sampling, using the Adam optimizer with a learning rate of $0.1$ for a horizon of $T_{ep}=60$ steps.

\paragraph{A tabular environment on a grid.}
To validate the theoretical interactions between the objective conflict coefficient $\Gamma$ and the coverability coefficient $C_{cov}$, we implement a controlled tabular environment. This setup allows for the exact calculation of occupancy measures and $\chi^2$-divergences, avoiding the approximation errors inherent in deep RL settings.

We consider a MOMDP modeled as an $N \times N$ grid (default $N=5$), defined by the tuple $\mathcal{M}=(\mathcal{S},\mathcal{A},Pr,r,\gamma)$.
Consistent with the conflicting-objectives regime, we construct $m$ objectives based on spatial navigation to distinct goal coordinates. Let $\mathcal{G} = \{g_1, \dots, g_m\} \subset \mathcal{S}$ be a set of diverse goal states. The vector-valued reward function $r(s,a) \in \mathbb{R}^m$ is defined using the negative Manhattan distance ($\ell_1$ norm): $r_k(s,a) = -\| \text{coord}(s') - \text{coord}(g_k) \|_1,$ where $s'$ is the next state and $\text{coord}(\cdot)$ maps a state index to its $(x,y)$ grid position.

We compute optimal policies exactly via soft value iteration. For a given preference weight $w \in \Delta^{m-1}$, the scalarized reward is $r_w = \langle w, r \rangle$. The policy $\pi_w^*$ is derived from the soft Bellman operator:
\[ V(s) = \tau \log \sum_{a} \exp\left(\frac{Q(s,a)}{\tau}\right), \quad \pi(a|s) \propto \exp\left(\frac{Q(s,a)}{\tau}\right), \]
with temperature $\tau=0.5$. The exact state-action occupancy distribution $d^\pi \in \mathbb{R}^{|\mathcal{S}||\mathcal{A}|}$ is computed by solving the linear system:
\[ d^\pi = (I - \gamma P_\pi)^{-1} d_0, \]
where $P_\pi$ is the policy-induced transition matrix and $d_0$ is the uniform initial distribution.

We explicitly calculate the conflict coefficient. We define the target mixture distribution $\bar{d}$ as the uniform average of the occupancies of the single-objective experts:
\[ \bar{d}(s,a) = \frac{1}{m} \sum_{k=1}^{m} d^{\pi_k^\star}(s,a). \]
The conflict coefficient is then measured as the $\chi^2$-divergence between a candidate policy's occupancy and this target mixture:
\[ \Gamma(\pi) = \chi^2(d^\pi \| \bar{d}) = \sum_{s,a} \frac{(d^\pi(s,a) - \bar{d}(s,a))^2}{\bar{d}(s,a) + \epsilon}. \]
We utilize this metric to compare coverability in Section \ref{sec_abla}. We find that a single scalarized policy $\pi_{single}$ exhibits a high $\chi^2$-divergence, indicating poor coverage of the Pareto frontier. In contrast, a routed ensemble with $K$ basis policies significantly reduces this divergence.

\subsection{Additional empirical results}\label{app_omitted}

\textbf{Empirical evaluation of the basis assumption.}
We measure the residual $\epsilon$ obtained by approximating preference-policy occupancies with convex combinations of basis-policy occupancies. Tabular occupancies are computed exactly, while benchmark occupancies are estimated from visitation distributions. Table~\ref{tab:occupancy_hull_residuals} reports the mean and maximum residuals over the evaluated preferences.

\begin{table*}[h]
\centering
\caption{Convex-hull occupancy residuals for exact tabular occupancies and empirical benchmark visitation distributions. Mean and maximum residuals are computed over the evaluated preferences.}
\label{tab:occupancy_hull_residuals}
\small
\setlength{\tabcolsep}{5pt}
\begin{tabular}{llccc}
\toprule
Setting & Basis construction & Number of policies & Mean $\epsilon$ & Max $\epsilon$ \\
\midrule
Tabular, aligned & Exact & 1 & $\sim 10^{-18}$ & $\sim 10^{-17}$ \\
Tabular, linear & Exact & 2 & $\sim 10^{-9}$ & $\sim 10^{-8}$ \\
Tabular, concave & Exact & 3 & 0.186 & 0.55 \\
Tabular, concave & Exact & 21 & $1.45\times 10^{-4}$ & $6.10\times 10^{-4}$ \\
MO-Hopper & Candidate-selected & 3 & 0.443 & 0.80 \\
MO-Hopper & Candidate-selected & 15 & 0.231 & 0.51 \\
MO-Hopper & Learned by Alg.~\ref{alg_routing} & 3 & 1.33 & 1.42 \\
Deep-Sea-Treasure & Learned by Alg.~\ref{alg_routing} & 6 & 0.15 & 0.75 \\
Fruit-Tree & Oracle & 6 & 0.553 & 0.85 \\
GridWorld, 21 distinct optima & Restricted learner & 2 & 0.475 & 3.474 \\
\bottomrule
\end{tabular}
\end{table*}

In the concave tabular setting, the mean residual decreases from $0.186$ at three bases to $0.088$, $0.024$, and $1.45\times 10^{-4}$ at five, nine, and 21 bases, respectively. On MO-Hopper, increasing the candidate-selected basis from three to 15 policies decreases the mean residual from $0.443$ to $0.231$, whereas the three policies learned by Algorithm~\ref{alg_routing} have mean residual $1.33$. These measurements show that a larger candidate basis can approximate the evaluated occupancies more closely, but the learned ensemble need not recover the exact structure assumed in the analysis.

\paragraph{Severe basis under-specification.}
In a controlled GridWorld with 21 distinct preference-optimal policies, we restrict the learner to two policies for 3000 episodes. The resulting mean and maximum occupancy residuals are $0.475$ and $3.474$, and the cumulative value excess is $2780.4$. This test measures performance when the available ensemble is substantially smaller than the set of preference-optimal policies.

\paragraph{Comparison on RLHF tasks.}
We evaluate the extension of our approach to RLHF settings on synthetic data, HelpSteer2~\cite{wang2024helpsteer}, and UltraFeedback~\cite{cui2023ultrafeedback}. For the synthetic experiment, we construct a synthetic preference dataset that captures the helpfulness-safety trade-off in RLHF. Each prompt is paired with candidate textual responses corresponding to different response styles, while rewards are generated from a controlled two-objective preference function. This allows us to study expert routing under interpretable language scenarios while retaining access to an oracle policy. For the real-data experiments, we evaluate Qwen3-1.7B and Llama-3.2-1B on both HelpSteer2 and UltraFeedback, yielding four model--dataset configurations. All four configurations follow the same training protocol, with per-basis LoRA adapters trained using PPO.

\begin{table}[h]
    \centering
    \caption{Summary of trajectory attribution results on two MO-Gymnasium benchmarks, Deep Sea Treasure and Fruit Tree.}\vspace{0.075in}
    {\small\begin{tabular}{l|cc}
    \toprule
                       & Deep Sea Treasure & Fruit Tree \\ \hline
    Influence functions      & $0.28_{\pm0.51}$ & $0.30_{\pm0.57}$ \\
    Linear surrogate models                     & $0.62_{\pm0.39}$ & $0.71_{\pm0.53}$ \\
    TracIn                   & $0.70_{\pm0.40}$ & $0.75_{\pm0.45}$ \\
    \textbf{Algorithm \ref{alg_krr}} & $\mathbf{0.75}_{\pm0.14}$ & $\mathbf{0.81}_{\pm0.18}$ \\
    \bottomrule
    \end{tabular}}
\end{table}

\paragraph{Choice of $N$.} Finally, we present comparison results by varying the size of the ensemble policy. In Figure \ref{fig_ablation_N}, we illustrate the choice of $N$ on the three datasets, including MO-HalfCheetah2d, MO-Hopper, and MO-Ant environments.

\begin{figure}[h]
    \centering
    \begin{subfigure}{0.315\linewidth}
        \centering
        \includegraphics[width=0.75\linewidth]{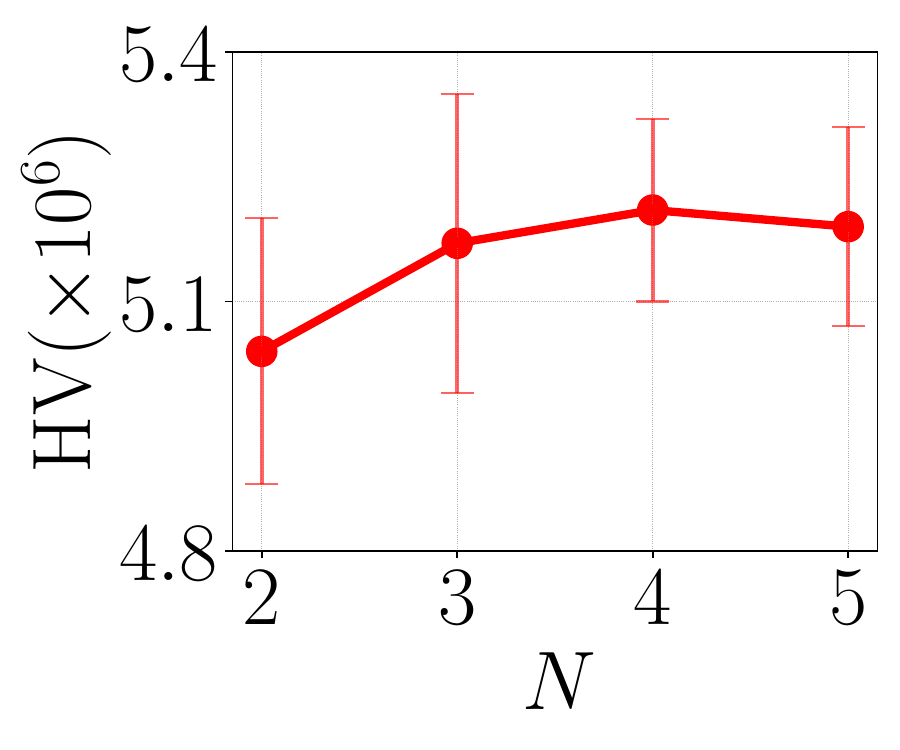}
    \end{subfigure}\hfill
    \begin{subfigure}{0.315\linewidth}
        \centering
        \includegraphics[width=0.75\linewidth]{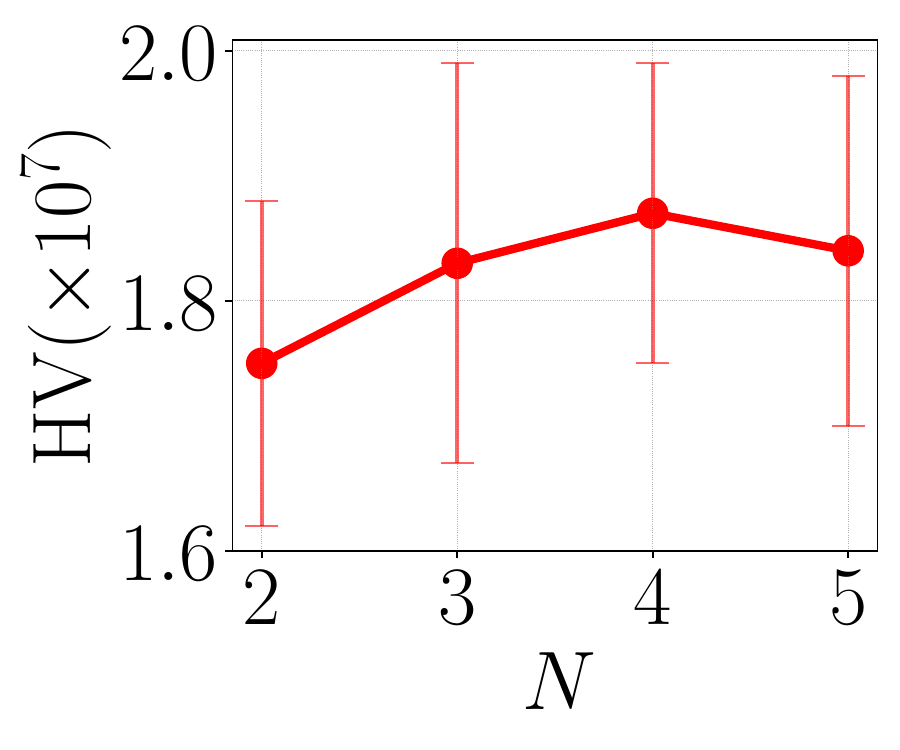}
    \end{subfigure}\hfill
    \begin{subfigure}{0.315\linewidth}
        \centering
        \includegraphics[width=0.75\linewidth]{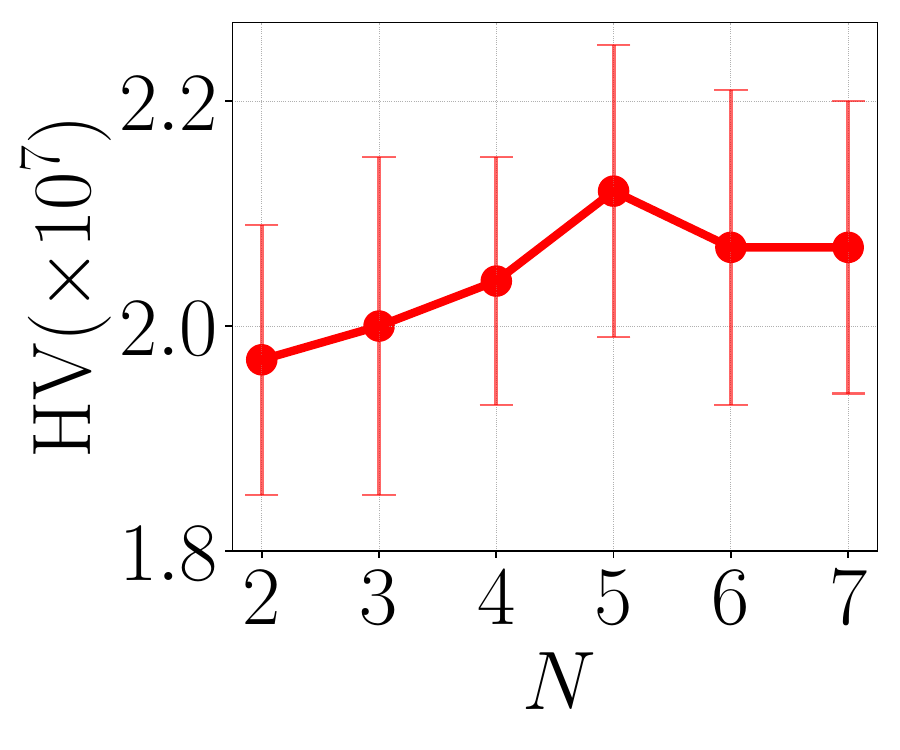}
    \end{subfigure}
    \vspace{-0.00in}
    \caption{Ablation analysis of the choice of the ensemble size $N$, measured by hypervolume (HV).}\label{fig_ablation_N}
\end{figure}

\subsection{Proximal policy optimization}\label{app_ppo}

In proximal policy optimization (PPO), access to the exact $A^\pi$ is not available. Instead, one must use an advantage estimator $\widehat{A}_t$ computed from rollout trajectories and a learned value function $V_\phi$. A common choice is generalized advantage estimation (GAE), where the temporal-difference residual at time $t$ is
\[ \delta_t = r_t + \gamma V_\phi(s_{t+1}) - V_\phi(s_t), \]
and the advantage estimate is
\[ \widehat{A}_t = \sum_{l=0}^{L-1} (\gamma \lambda)^l \delta_{t+l}, \]
for some truncation horizon $L$, the discount factor $\gamma \in [0, 1)$, and GAE parameter $\lambda \in [0,1]$. More generally, we use $\widehat{A}_t$ to denote any scalar advantage estimate derived from the rollout and a baseline; our theory treats these $\widehat{A}_t$ as given and is independent of the particular estimator.

Training proceeds in rounds $k = 0,1,2,\ldots$. In round $k$, the current policy $\pi_{\theta^{(k)}}$ interacts with the environment and collects a rollout buffer $B^{(k)} = \{ z_i^{(k)} \}_{i=1}^{n_k},$
where each record
\[ z_i^{(k)} = (s_i, a_i, r_i, \log \pi_{\theta^{(k)}}(a_i \mid s_i), v_i, \widehat{A}_i) \]
contains the transition $(s_i, a_i, r_i)$ together with auxiliary quantities such as the log-probability under the current policy, a discounted value estimate $v_i$, and an advantage estimate $\widehat{A}_i$. These records are the atomic units used in PPO updates and form the basic entities for local attribution. 

Given a buffer $B^{(k)}$, PPO performs a number of gradient steps to update $\theta$ from $\theta^{(k)}$ to $\theta^{(k+1)}$ by maximizing a clipped surrogate objective.
For a single record $z = (s,a,r,\ldots)$, let $\ell_{\mathrm{PPO}}(\theta; z)$ denote the per-record loss corresponding to the standard clipped PPO objective (possibly combined with value and entropy terms). The empirical PPO loss over the buffer is then
\[ \hat{L}^{(k)}(\theta)
= \frac{1}{n_k} \sum_{i=1}^{n_k} \ell_{\mathrm{PPO}}(\theta; z_i^{(k)}), \]
and PPO updates $\theta$ to approximately minimize $\hat{L}^{(k)}(\theta)$, starting from $\theta^{(k)}$.

\end{document}